\documentclass{article}

\usepackage{microtype}
\usepackage{graphicx}
\usepackage{subcaption}
\usepackage{booktabs} 
\usepackage{multirow}
\usepackage[table]{xcolor}
\usepackage{makecell}

\usepackage{hyperref}

\newcommand{\UpperRoman}[1]{\uppercase\expandafter{\romannumeral#1}}

\usepackage[accepted]{icml2026}

\usepackage{amsmath}
\usepackage{amssymb}
\usepackage{mathtools}
\usepackage{amsthm}

\usepackage[capitalize,noabbrev]{cleveref}

\theoremstyle{plain}
\newtheorem{theorem}{Theorem}[section]
\newtheorem{proposition}[theorem]{Proposition}
\newtheorem{lemma}[theorem]{Lemma}

\theoremstyle{definition}

\newtheorem{assumption}[theorem]{Assumption}
\theoremstyle{remark}
\newtheorem{remark}[theorem]{Remark}

\usepackage[textsize=tiny]{todonotes}

\icmltitlerunning{Frontier-Aware Instability-Reweighted Calibration for PTQ of dLLMs}

\begin{document}

\twocolumn[
  \icmltitle{FAIR-Calib: Frontier-Aware Instability-Reweighted Calibration for Post-Training Quantization of Diffusion Large Language Models}



  \icmlsetsymbol{project_lead}{$\dagger$}
  \icmlsetsymbol{corr}{$\ddagger$}

  \begin{icmlauthorlist}
    \icmlauthor{Haoyu Huang}{ncee}
    \icmlauthor{Linlin Yang}{cuc,corr}
    \icmlauthor{Sheng Xu}{independent}
    \icmlauthor{Boyu Liu}{iai}
    \icmlauthor{Guodong Guo}{eit}
    \icmlauthor{Zhongqian Fu}{huawei,project_lead}
    \icmlauthor{Hang Zhou}{huawei}
    \icmlauthor{Baochang Zhang}{iai,russia,project_lead}
  \end{icmlauthorlist}


  \icmlaffiliation{ncee}{National College for Excellent Engineers, Beihang University, Beijing, China}
  \icmlaffiliation{iai}{School of Artificial Intelligence, Beihang University, Beijing, China}
  \icmlaffiliation{cuc}{State Key Laboratory of Media Convergence and Communication, Communication University of China, Beijing, China}
  \icmlaffiliation{huawei}{Huawei Noah’s Ark Lab, China}
  \icmlaffiliation{independent}{Independent researcher}
  \icmlaffiliation{eit}{Ningbo Institute of Digital Twin, Eastern Institute of Technology, Ningbo, China}
  \icmlaffiliation{russia}{Artificial Intelligence Research Center, Lobachevsky State University, Nizhny Novgorod 603022, Russia}
  
  \icmlcorrespondingauthor{Linlin Yang}{lyang@cuc.edu.cn}

  \icmlkeywords{Post-Training Quantization, Diffusion Language Models, Large Language Models}

  \vskip 0.2in
]



\printAffiliationsAndNotice{
  \textsuperscript{$\dagger$}Project lead.
  \textsuperscript{$\ddagger$}Corresponding author.
}  

\begin{abstract}
Diffusion Large Language Models (dLLMs) refine tokens iteratively but commit them irreversibly, leading to a ``stability lag'' where early decisions remain fragile even after being written. 
We reveal that Post-Training Quantization (PTQ) error easily flips these borderline decisions at the write frontier, which are then permanently locked in and amplified. 
To address this, we propose \textbf{\emph{Frontier-Aware Instability-Reweighted Calibration}} (\textbf{\emph{FAIR-Calib}}), a two-stage PTQ framework for dLLMs. 
Stage~I probes a full-precision teacher to estimate a position prior that combines frontier hits and masked-stage reliability. 
Stage~II performs off-policy, layer-wise calibration by minimizing a reweighted hidden-state MSE, effectively prioritizing the protection of fragile frontier states without requiring expensive end-to-end diffusion rollouts. 
We further theoretically justify our weighted objective as a surrogate for output KL divergence.
%
Empirically, FAIR-Calib consistently outperforms state-of-the-art baselines on LLaDA and Dream (W4A4), significantly reducing frontier decision flips and suppressing post-commit mismatches across diverse benchmarks.
\end{abstract}

\section{Introduction}

\begin{figure}[!t]
    \centering
    \includegraphics[width=.93\linewidth]{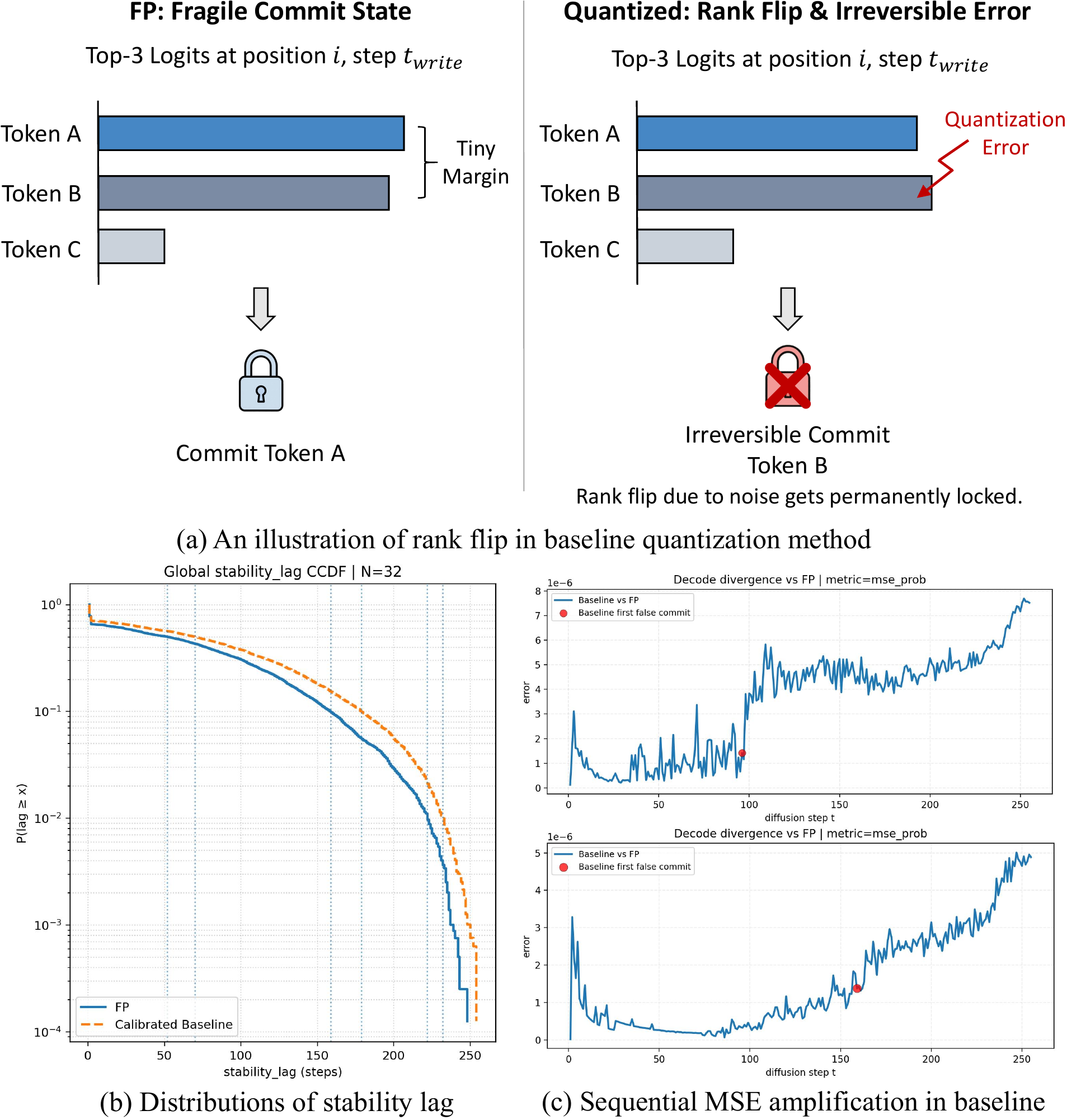}
    \caption{
    \textbf{(a)} Schematic: naive quantization perturbs relative logits and permanently commits the wrong token, highlighting a key failure mode of current quantization methods under diffusion decoding.
    \textbf{(b)} Complementary cumulative distribution function (CCDF) of the stability lag $\delta_{\text{lag}}$ in the generation region ($N_{\text{samples}}=32$). 
    Although most positions stabilize shortly after commit, the heavy tail indicates \emph{fragile commit states} that keep oscillating post-commit, showing that \textbf{commitment $\neq$ stabilization}. 
    The calibrated baseline exhibits an even heavier tail than FP, implying that standard calibration does not remove fragility.
    \textbf{(c)} Decode divergence w.r.t the FP (metric: $\mathrm{mse\_prob}$) as a function of diffusion step. 
    The baseline shows progressive, step-wise amplification of error once a false commit occurs (red marker), indicating that small local perturbations can trigger sustained divergence across subsequent steps. 
    }
    \label{fig:stability_lag}
\end{figure}

Transformer-based large language models have achieved remarkable generalization and instruction-following abilities at the scale of tens to hundreds of billions of parameters, as exemplified by the LLaMA~\cite{touvron2023llama} and Qwen~\cite{yang2025qwen3} model families.

Recently, diffusion large language models (dLLMs) have emerged as a promising alternative to autoregressive decoding, offering iterative refinement and flexible infilling by initializing an entire response sequence upfront and denoising it with bidirectional attention over multiple steps~\citep{nie2025llada,zhu2025llada1p5,ye2025dream}.
This iterative masked refinement is conceptually related to earlier non-left-to-right decoding paradigms~\citep{ghazvininejad2019mask,stern2019insertion,chang2022maskgit}.
However, such multi-step global refinement substantially increases inference-time compute and memory footprints, making post-training quantization (PTQ) crucial for practical deployment~\citep{frantar2022gptq,frantar2023sparsegpt,xiao2023smoothquant,lin2024awq,ashkboos2024quarot,sun2024flatquant}.


However, quantizing dLLMs is not a straightforward extension of autoregressive PTQ:
\citet{lin2025quantization} systematically transferred classic low-bit PTQ from autoregressive LLMs to dLLMs and found that naive transfer degrades notably on challenging reasoning tasks.
We attribute this brittleness to a diffusion-specific inference mechanism: dLLM decoding proceeds by repeatedly predicting token distributions for all positions and \emph{unmasking} a subset of mask positions into concrete tokens, reducing the mask set step by step. We refer to this irreversible write as a \emph{commit}.
This \emph{irreversibility} means that once a token is written, it becomes part of the conditioning context and cannot be revised, even if the model's posterior belief about that position continues to evolve.
Consequently, the decoding process becomes particularly brittle under perturbations: as illustrated in Figure~\ref{fig:stability_lag}\textbf{(a)}, quantization perturbations can easily flip a borderline decision at the write frontier, creating an error that is \textbf{permanently locked in}.

We trace this brittleness to a fundamental mismatch: \textit{commitment $\neq$ stabilization}.
As visualized in Figure~\ref{fig:stability_lag}\textbf{(b)}, even in full precision, many positions exhibit a significant \emph{stability lag} $\delta_{\text{lag}}$. We define $\delta_{\text{lag}}$ as the number of diffusion steps after the first irreversible commit until the model's top-1 prediction becomes consistent with the final decoded token for all subsequent steps. This means that many positions continue to oscillate in their top-1 prediction long after being committed.
The heavy tail of this distribution reveals a non-negligible subset of \emph{fragile commit states}, where decisions remain context-sensitive and can keep oscillating post-commit.
Standard calibration in PTQ methods exacerbates this issue, prolonging the instability and exposing more positions to the irreversible flips described above. 
Crucially, these locked-in flips do not remain isolated; instead, they can lead to severe degradation in generation quality. Because the incorrect token is fixed as context, it forces the model to refine subsequent tokens based on the error. Figure~\ref{fig:stability_lag}\textbf{(c)} confirms this trajectory: once a false commit occurs at a fragile frontier (red marker), the divergence from the teacher does not vanish but undergoes a \textbf{progressive, step-wise amplification} across subsequent refinement steps, severely degrading generation quality.

To address these challenges, we propose the \textbf{\emph{Frontier-Aware Instability-Reweighted Calibration}} (\textbf{\emph{FAIR-Calib}}) framework for dLLM quantization. Our framework consists of two synergistic stages: (i) \emph{Teacher Probing}: We utilize the full-precision teacher to estimate a position-aware prior. This prior uniquely integrates frontier irreversibility (upweighting positions at commit time) and masked-stage reliability (accounting for teacher confidence). We show that this prior is largely mechanism-driven and exhibits robust cross-corpus transferability. (ii) Off-policy Weighted Calibration: We perform efficient layer-wise hidden-state alignment using the estimated weights. By employing a teacher-forcing surrogate, FAIR-Calib avoids expensive end-to-end diffusion rollouts while effectively stabilizing the write frontier. Empirically, FAIR-Calib significantly reduces write-step decision flips and post-commit mismatches including both ``mean-disagree'' and ``never-agree'' cases. Furthermore, our method successfully mitigates the sequential error amplification typically triggered by false commits, as evidenced by improved probability-MSE traces. Our major contributions in this paper are summarized as:
\begin{itemize}
    \item We identify and quantify brittleness in dLLM decoding induced by \emph{irreversible commit} under \emph{fragile commit states}, where low-bit quantization flips borderline write decisions and the resulting errors are locked in and amplified across refinement steps.
    
    \item We propose \textbf{FAIR-Calib}, a two-stage PTQ framework for dLLMs (Figure~\ref{fig:fair_calib_overview}): Stage~I probes an FP teacher to estimate a \emph{frontier-aware, reliability-gated} position prior, and Stage~II performs \emph{off-policy} layer-wise teacher-forcing calibration via a weighted hidden-state MSE, avoiding expensive diffusion rollouts.

    \item We justify an additive time$\times$position weighting and its weighted hidden-state MSE surrogate under mild assumptions, and empirically show consistent W4A4 gains on Dream/LLaDA across diverse benchmarks, with fewer teacher-forced commit-step flips, reduced post-commit mismatch, and suppressed error amplification.
\end{itemize}


\begin{figure*}[t]
    \centering
    \includegraphics[width=0.88\linewidth]{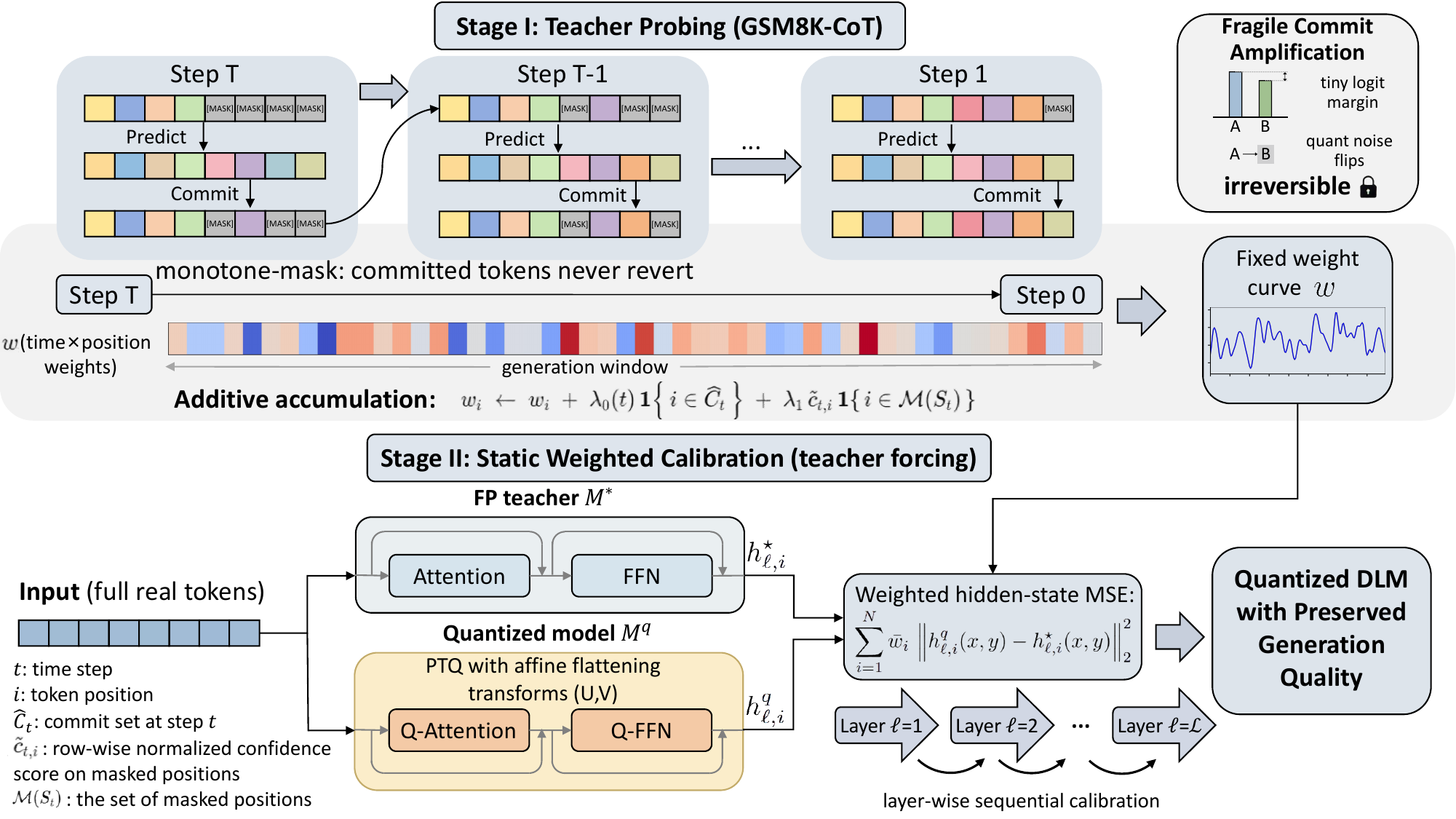}
    \caption{\textbf{FAIR-Calib overview.}
    \textbf{Stage \UpperRoman{1}} probes the FP teacher to estimate a fixed position prior $\bar{w}$ that highlights irreversible commit positions and masked-stage reliability.
    \textbf{Stage \UpperRoman{2}} performs layer-wise teacher-forcing calibration with a $\bar{w}$-weighted hidden-state MSE to obtain a W4A4 model without diffusion rollouts.}
    \label{fig:fair_calib_overview}
\end{figure*}

\section{Related Works}
\subsection{Diffusion Language Models}
Diffusion models were generalized to discrete state spaces via denoising diffusion over categorical variables~\citep{austin2021structured}.
Subsequent works explored diffusion-style text generation by iterative denoising of token sequences or latent representations, enabling non-left-to-right generation with global revision~\citep{li2022diffusion,gong2022diffuseq}.
This refinement view is also related to earlier iterative decoding paradigms that repeatedly revise low-confidence positions~\citep{ghazvininejad2019mask,stern2019insertion,chang2022maskgit}.
More recently, diffusion \emph{large} language models scale masked refinement to Transformer LLMs by initializing an answer window with masks and denoising it with bidirectional attention over multiple steps~\citep{nie2025llada,zhu2025llada1p5,ye2025dream}.
While enabling flexible infilling, their stepwise \emph{commit} and long-horizon refinement increase inference cost and introduce new brittleness modes for compression such as PTQ.

\subsection{Post-training Quantization for Large Language Models}
\label{related:ptq}
Post-training quantization (PTQ) compresses pretrained LLMs by quantizing weights and/or activations with a small calibration set~\citep{zhu2024survey_compress_llm}.
Reconstruction-based PTQ explicitly minimizes layer-wise output/hidden discrepancies, e.g., GPTQ-style second-order updates~\citep{frantar2022gptq}.
For low-bit joint weight--activation quantization (e.g., W4A4), distribution mismatch and outliers are key bottlenecks, motivating activation smoothing~\citep{xiao2023smoothquant}, rotation-based conditioning~\citep{ashkboos2024quarot}, and affine flattening transforms~\citep{sun2024flatquant}, along with complementary advances such as activation-aware scaling/clipping and system-oriented stacks~\citep{lin2024awq,shao2023omniquant,yao2022zeroquant,tseng2024quip}.
Most PTQ methods are developed under autoregression and do not account for diffusion decoding with \emph{irreversible commits}; our work targets this gap.


\section{Method}
\subsection{Preliminaries}
\subsubsection{PTQ with Affine Flattening Transforms}
\label{sec:quant_prelim}
We follow a standard post-training quantization (PTQ) setup, quantizing weights and activations to low bit-width without updating pretrained FP weights.
In addition, we adopt the same \emph{layer-wise learnable affine transformation} design as FlatQuant~\cite{sun2024flatquant} to flatten weight/activation distributions before applying uniform quantization.

Let $\mathcal{Q}=\{q_{\min},\ldots,q_{\max}\}$ be the integer grid determined by bit-width $b$.
Given a scale $s>0$ and an (optional) zero-point $z$, the quantizer maps a real-valued variable $u$ to
\begin{equation}
\bar{u}=\mathrm{clip}\!\left(\left\lfloor \tfrac{u}{s} + z \right\rceil,\; q_{\min},q_{\max}\right),\qquad
 Q(u)=s(\bar{u}-z),
\end{equation}
where $\lfloor\cdot\rceil$ denotes rounding-to-nearest and $\mathrm{clip}$ clamps to the valid integer range. In this paper, we use symmetric quantization and set $z=0$.

For each linear layer $y = Wx$, we introduce an invertible affine reparameterization
\begin{equation}
y = U^{-1}\,\widetilde{W}\,\widetilde{x},\qquad \widetilde{W}=U W V^{-1},\;\; \widetilde{x}=Vx,
\end{equation}
and compute the quantized output as
\begin{equation}
y^{q} = U^{-1}\Big(Q_W(\widetilde{W})\, Q_X(\widetilde{x})\Big),
\end{equation}
where $Q_W$ and $Q_X$ are uniform affine quantizers (with their own scales and, if used, zero-points).

We calibrate the quantization-related parameters for each layer 
on a small calibration set $\mathcal{D}$, while keeping pretrained weights frozen.
Denoting the FP and quantized layer mappings by $F_\ell^\star(\cdot)$ and $F_\ell^{q}(\cdot;\theta_\ell)$, respectively, we solve a sequential layer-wise reconstruction problem:
\begin{equation}
\min_{\theta_\ell}\ \mathbb{E}_{x\sim\mathcal{D}}\left[\left\|F_\ell^{q}(x;\theta_\ell)-F_\ell^{\star}(x)\right\|_2^2\right].
\end{equation}
In our method, this generic reconstruction loss is instantiated as a \emph{position-weighted hidden-state MSE} that emphasizes fragile commit positions (Section~\ref{sec:theory}).

\subsubsection{Masked Diffusion Decoding as a Markov Chain}
\label{dlm_prelim}
Let $\mathcal{V}$ be the vocabulary and $N$ the full sequence length.
We denote the prompt length by $N_p$ and the generation window (answer region) by indices $\mathcal{G}=\{N_p+1,\dots,N\}$.
We index reverse diffusion from $t=T$ (fully masked) to $t=0$ (final).
A masked diffusion decoder maintains a partially observed state
\begin{equation}
S_t \equiv X^{(t)} \in (\mathcal{V}\cup\{\texttt{[MASK]}\})^N,
\end{equation}
where some positions are concrete tokens and others are \texttt{[MASK]}.
Let $\mathcal{M}(S_t)=\{i: S_t[i]=\texttt{[MASK]}\}$ be the set of masked positions.

Given state $S_t$, a model $M$ (FP teacher $M^\star$ or quantized model $M^q$) produces per-position logits $z_{M,i}(S_t)\in\mathbb{R}^{|\mathcal{V}|}$ and categorical distributions
\begin{equation}
p_{M,i}(\cdot\mid S_t)=\mathrm{softmax}(z_{M,i}(S_t)).
\end{equation}
A decoding policy selects a commit set $C_t \subseteq \mathcal{M}(S_t)$ and writes tokens at committed positions.
We emphasize that dLLM decoding is not necessarily ``posterior sampling''; it is an algorithmic transition rule.

\textbf{Initialization and $p(S_T)$.}
The initial state $S_T$ is typically prompt + fully masked generation window:
\begin{equation}
S_T = [x_{1:N_p},\underbrace{\texttt{[MASK]},\dots,\texttt{[MASK]}}_{N-N_p}],
\end{equation}
which induces a degenerate initial distribution
\begin{equation}
p(S_T)=\delta(S_T=\bar{S}_T),
\end{equation}
i.e., a Dirac delta at the deterministic initialization $\bar{S}_T$.

Quantization-induced local noise propagates and scales within the Markov decoding chain, demanding spatio-temporally aware calibration.

\subsection{Overview of FAIR-Calib}
\label{method:overview}
We propose a two-stage post-training quantization framework tailored for diffusion language models with irreversible commits.
It separates \emph{where errors are amplified} from \emph{how calibration is performed}. 

\textbf{Stage \UpperRoman{1}: teacher probing.}
We run a small number of full-precision teacher rollouts \textit{under stochastic (random) commits} and construct a fixed position prior $\bar{w}$ over the generation window. This choice of random commits is deliberate: it aligns the probing process with the randomized masking mechanisms used during dLLM training \citep{nie2025llada} and provides a policy-agnostic coverage of the masked state space, ensuring that $\bar{w}$ reflects the model's intrinsic structural sensitivity.
The prior combines (i) a \emph{frontier-hit} signal that marks when a position is committed (irreversible), and (ii) a \emph{masked-stage teacher-reliability} signal computed from the teacher distribution while the position remains masked.
Weights are additively accumulated across diffusion steps and then window-aligned/normalized.

\textbf{Stage \UpperRoman{2}: static weighted calibration.}
Using $\bar{w}$, we calibrate the quantized model with standard layer-wise teacher-forcing on fully observed tokens, minimizing a position-weighted hidden-state MSE.
This avoids expensive end-to-end diffusion rollouts during calibration while prioritizing high-impact frontier commits and using masked-stage reliability to obtain a robust, transferable prior. 

Crucially, this design stems from the insight that positional vulnerability is an intrinsic structural property dictated by the model weights and decoding dynamics. Therefore, the importance prior estimated from the masked probing phase can be effectively reused in a full-text calibration setting without loss of relevance.

\subsection{Frontier-Aware Time$\times$Position Weights}
\label{method:weights}
We design a position-wise weight $w_i$ by probing the FP teacher decoding dynamics.
At each step $t$ along a teacher rollout, we accumulate two additive components in answer region:
\begin{equation}
w_i \leftarrow w_i + \lambda_0(t)\,\mathbf{1}\{i\in \widehat{C}_t\}
 + 
\lambda_1 \,\widetilde{c}_{t,i}\,\mathbf{1}\{i\in \mathcal{M}(S_t)\},
\end{equation}
where:
(i) $\widehat{C}_t$ is the realized write frontier (a sampled $C_t$) in the teacher rollout; $\mathbf{1}\{i\in \widehat{C}_t\}$ is the \emph{frontier-hit} indicator at step $t$.
(ii) $\widetilde{c}_{t,i}$ is a row-wise normalized \emph{masked-state teacher sharpness} score (e.g., token-probability, negative entropy, or margin), used as a \emph{reliability gate} when aggregating a transferable static prior for off-policy Stage~II calibration.
(iii) $\lambda_0(t)$ follows an early-boost schedule to emphasize earlier commits that influence more subsequent steps; $\lambda_1$ scales the fragility term.
Importantly, the weight is \emph{additively accumulated} across steps, which will be theoretically justified in Section~\ref{sec:theory}. We probe with random commits to obtain \emph{training-aligned, policy-agnostic} coverage of partially-masked states; all accuracy evaluations still follow each model’s default inference-time policy.


\textbf{Window alignment and floor.}
In prompt-conditioned dLLM generation, diffusion acts primarily on the answer window $\mathcal{G}$ while the prompt is a fixed condition.
We therefore align $w$ to the last-$K$ generation window (e.g., $K=256$) and apply a small floor weight outside $\mathcal{G}$ for numerical stability in layerwise calibration.

\subsection{Off-Policy Static Teacher-Forcing Calibration with Weighted Hidden MSE}
\label{method:loss}
Direct end-to-end optimization over diffusion trajectories during calibration would require rolling out the quantized model for all steps and updating quantization parameters iteratively, which is prohibitively expensive and is incompatible with standard layer-wise PTQ calibration.
Instead, we use an off-policy surrogate: rather than calibrating on the on-policy masked states induced by a commit policy, we feed fully observed real tokens (no masks) and align hidden representations between the quantized model and the FP teacher.
For each layer/block $\ell$ in a sequential order, we calibrate only $\theta_\ell$ while keeping other layers fixed:
\begin{equation}
\arg\min_{\theta_\ell}\ 
\mathbb{E}_{(x,y)\sim\mathcal{D}}
\left[
\sum_{i=1}^{N} \bar{w}_i\,
\left\|h_{\ell,i}^q(x,y;\theta_{\le \ell}) - h_{\ell,i}^\star(x,y)\right\|_2^2
\right],
\label{eq:layerwise_loss}
\end{equation}

where $\theta_{\le \ell}$ denotes that layers $<\ell$ have been already calibrated and frozen.
Section~\ref{sec:theory} shows that this objective is a principled surrogate for reducing $\mathrm{KL}(\mu^\star\|\mu^q)$ under mild assumptions, where $\mu^\star$ and $\mu^q$ are the final decoded output distributions of the teacher and the quantized model (formalized in Section~\ref{sec:output_divergence}).

\section{Theoretical Analysis}
\label{sec:theory}
\textbf{Takeaway. }Under model-independent random commits and mild smoothness, 
$\mathrm{KL}(\mu^\star\|\mu^q)$ admits an additive time$\times$position upper bound with contributions only from committed positions, 
and each term is controlled by a squared hidden-state discrepancy---justifying our Stage~II $\bar w$-weighted hidden-state MSE surrogate.

Our analysis proceeds in three steps: (i) upper bound the output KL by a trajectory KL and decompose it across timesteps;
(ii) show each per-step divergence only involves committed positions;
(iii) bound token-level KL by squared logit error and then by hidden-state MSE, yielding a tractable weighted surrogate.
All necessary proofs and additional remarks are provided in Appendix~\ref{appendix:proofs_remarks}.

\subsection{Output Divergence Objective}
\label{sec:output_divergence}
Let $\tau=(S_T,S_{T-1},\dots,S_0)$ denote a decoding trajectory.
Under policy $\pi$ and model $M$, the induced trajectory distribution is
\begin{equation}
\mathbb{P}^M(\tau) = p(S_T)\prod_{t=1}^T K_t^M(S_{t-1}\mid S_t),
\end{equation}
where $K_t^M$ is the one-step transition kernel.
The output distribution is the marginal of $S_0$:
\begin{equation}
\mu^M(S_0) = \sum_{\tau: S_0(\tau)=S_0} \mathbb{P}^M(\tau).
\end{equation}
Our ultimate theoretical objective for calibration is distribution alignment between FP and quantized outputs:
\begin{equation}
\min \ \mathrm{KL}(\mu^\star\|\mu^q),
\end{equation}
which is intractable to compute directly, because evaluating $\mu^M$ requires marginalizing over all possible commit-set choices and token assignments across $T$ steps.

\subsection{From Output Divergence to Trajectory Divergence}
The objective $\mathrm{KL}(\mu^\star\|\mu^q)$ compares the \emph{final} decoded outputs, but it is difficult to evaluate because $\mu^M$ marginalizes over all intermediate commit decisions across $T$ steps.
We therefore upper bound the output divergence by the divergence between the \emph{entire decoding trajectories}, which admits a Markovian decomposition into per-step terms.
\begin{lemma}[Data processing upper bound]
\label{lem:dpi}
Let $g(\tau)=S_0$ be the mapping from a trajectory to its final state.
Then
\begin{equation}
\mathrm{KL}(\mu^\star\|\mu^q) \le \mathrm{KL}(\mathbb{P}^\star(\tau)\|\mathbb{P}^q(\tau)).
\end{equation}
\end{lemma}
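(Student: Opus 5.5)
Lemma~\ref{lem:dpi} is an instance of the data processing inequality for KL divergence under a deterministic map, so the plan is to prove that special case directly rather than just cite it. The trajectory space is finite: there are finitely many states in $(\mathcal{V}\cup\{\texttt{[MASK]}\})^N$ and finitely many steps $T$. Hence all sums below are finite, and we only need to handle supports and the conventions $0\log(0/q)=0$ and $p\log(p/0)=+\infty$. If $\mathbb{P}^\star$ is not absolutely continuous with respect to $\mathbb{P}^q$, the right-hand side is $+\infty$ and there is nothing to prove. So we assume $\mathbb{P}^\star\ll\mathbb{P}^q$. This implies $\mu^\star\ll\mu^q$, because $\mu^q(S_0)=0$ forces $\mathbb{P}^q(\tau)=0$, and hence $\mathbb{P}^\star(\tau)=0$, for every $\tau\in g^{-1}(S_0)$.

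The key step is to partition the trajectories by their final state. By the definition of $\mu^M$ as the marginal of $S_0$,
\begin{equation*}
\mathrm{KL}(\mathbb{P}^\star\|\mathbb{P}^q)=\sum_{S_0}\ \sum_{\tau\in g^{-1}(S_0)}\mathbb{P}^\star(\tau)\log\frac{\mathbb{P}^\star(\tau)}{\mathbb{P}^q(\tau)} .
\end{equation*}
For each fixed $S_0$ we apply the log-sum inequality to the inner sum:
\begin{equation*}
\sum_{\tau\in g^{-1}(S_0)} a_\tau\log\frac{a_\tau}{b_\tau}\ \ge\ \Big(\sum_\tau a_\tau\Big)\log\frac{\sum_\tau a_\tau}{\sum_\tau b_\tau}=\mu^\star(S_0)\log\frac{\mu^\star(S_0)}{\mu^q(S_0)},
\end{equation*}
where $a_\tau=\mathbb{P}^\star(\tau)$ and $b_\tau=\mathbb{P}^q(\tau)$. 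The log-sum inequality follows from convexity of $x\mapsto x\log x$ via Jensen's inequality, or equivalently from $\log$-concavity. Summing over $S_0$ gives $\mathrm{KL}(\mu^\star\|\mu^q)$ on the right-hand side, which is the claim.

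An alternative route is the chain rule for KL. Write $\mathbb{P}^M(\tau)=\mu^M(S_0)\,\mathbb{P}^M(\tau\mid S_0)$. Then
\begin{equation*}
\mathrm{KL}(\mathbb{P}^\star\|\mathbb{P}^q)=\mathrm{KL}(\mu^\star\|\mu^q)+\mathbb{E}_{S_0\sim\mu^\star}\big[\mathrm{KL}(\mathbb{P}^\star(\cdot\mid S_0)\|\mathbb{P}^q(\cdot\mid S_0))\big],
\end{equation*}
and the second term is nonnegative by Gibbs' inequality. This version also identifies the slack as the expected conditional trajectory divergence, which is useful for the remarks that follow.

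There is no real obstacle here. The only point needing care is the bookkeeping of zero-probability trajectories. First, terms with $\mu^\star(S_0)=0$ contribute zero on both sides. Second, the conditional distributions $\mathbb{P}^M(\cdot\mid S_0)$ are only defined where $\mu^M(S_0)>0$, which is guaranteed on the support of $\mu^\star$ by the absolute-continuity reduction above. The degenerate initial distribution $p(S_T)=\delta(S_T=\bar{S}_T)$ is common to both models and plays no role at this stage. It becomes relevant only in the subsequent Markov decomposition.
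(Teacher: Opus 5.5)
Your proof is correct and follows the paper's route. The paper notes $\mu^M=g_\#\mathbb{P}^M$ and cites the data processing inequality; you instead prove that inequality directly for the deterministic map $g$ on the finite trajectory space, via the log-sum inequality on each fiber $g^{-1}(S_0)$ (it follows from concavity of $\log$, not ``log-concavity'' as you wrote). Your chain-rule variant is a worthwhile addition because it identifies the slack exactly as $\mathbb{E}_{S_0\sim\mu^\star}\bigl[\mathrm{KL}(\mathbb{P}^\star(\cdot\mid S_0)\,\|\,\mathbb{P}^q(\cdot\mid S_0))\bigr]$, i.e., how differently the two models weight the intermediate trajectories (commit orders) that end in the same $S_0$, which the paper does not make explicit.
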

Lemma~\ref{lem:dpi} formalizes that matching the full trajectory distribution is sufficient: any mismatch in outputs must originate from mismatches along the trajectory.

Next, we use the chain rule for KL on Markov path measures, where $d_t^\star$ denotes the teacher's \emph{occupancy measure}, i.e., the marginal distribution of $S_t$ when rolling out the teacher from $p(S_T)$.
\begin{lemma}[Markov chain KL decomposition]
\label{lem:markov_kl}
Assume both chains share the same initial distribution $p(S_T)$.
Let
$\mathbb{P}^M(\tau)=p(S_T)\prod_{t=1}^T K_t^M(S_{t-1}\mid S_t)$ for $M\in\{\star,q\}$.
Then
\begin{equation}
\begin{aligned}
&\mathrm{KL}(\mathbb{P}^\star(\tau)\|\mathbb{P}^q(\tau)) \\
&=
\sum_{t=1}^T
\mathbb{E}_{S_t\sim d_t^\star}\left[
\mathrm{KL}\!\left(K_t^\star(\cdot\mid S_t)\,\|\,K_t^q(\cdot\mid S_t)\right)
\right],
\end{aligned}
\end{equation}
where $d_t^\star$ is the teacher occupancy measure at step $t$.
\end{lemma}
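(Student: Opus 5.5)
The plan is to expand the log-likelihood ratio of the two path measures and use the Markov factorization to split it into a sum of per-step log-ratios. Because both chains start from the same initial distribution $p(S_T)$, the factor $p(S_T)$ cancels in the ratio. Therefore, $\mathbb{P}^\star$-almost surely,
\begin{equation*}
\log\frac{\mathbb{P}^\star(\tau)}{\mathbb{P}^q(\tau)}=\sum_{t=1}^T \log\frac{K_t^\star(S_{t-1}\mid S_t)}{K_t^q(S_{t-1}\mid S_t)}.
\end{equation*}
If $\mathbb{P}^\star\not\ll\mathbb{P}^q$, both sides of the lemma equal $+\infty$ and the statement holds trivially. Hence I will assume absolute continuity, which amounts to $K_t^\star(\cdot\mid S_t)\ll K_t^q(\cdot\mid S_t)$ for $d_t^\star$-almost every $S_t$. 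Since the state space $(\mathcal{V}\cup\{\texttt{[MASK]}\})^N$ is finite, all measures are discrete, and this reduces to ordinary sums with no measure-theoretic subtleties.

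Next, I take expectations under $\mathbb{P}^\star$ and use linearity to write the trajectory KL as $\sum_t \mathbb{E}_{\tau\sim\mathbb{P}^\star}[\log K_t^\star(S_{t-1}\mid S_t)/K_t^q(S_{t-1}\mid S_t)]$. Each summand depends only on the pair $(S_t,S_{t-1})$. Marginalizing the other coordinates of the teacher path measure, this pair has joint law $d_t^\star(S_t)\,K_t^\star(S_{t-1}\mid S_t)$. To justify this, I sum out $S_{t-2},\dots,S_0$; the kernels sum to one, working backwards from $S_0$. I then sum out $S_T,\dots,S_{t+1}$, which by definition produces the teacher occupancy $d_t^\star$.

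Conditioning on $S_t$, the inner sum over $S_{t-1}$ is exactly $\mathrm{KL}(K_t^\star(\cdot\mid S_t)\,\|\,K_t^q(\cdot\mid S_t))$, and the outer average is over $d_t^\star$. This gives the stated identity.

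The main obstacle is justifying the interchange of sum and expectation when individual terms might be infinite or of mixed sign. Each conditional KL is nonnegative, so I would apply the tower property per step. All quantities are finite sums over a finite state space, so linearity is unproblematic. In the absolutely continuous case each term is finite, and otherwise both sides are $+\infty$.
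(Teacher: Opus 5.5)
Your proposal is correct and follows essentially the same route as the paper: you cancel $p(S_T)$ in the path log-likelihood ratio, split it into per-step kernel log-ratios, and condition on $S_t\sim d_t^\star$ to recover each kernel KL. Your explicit treatment of the case $\mathbb{P}^\star\not\ll\mathbb{P}^q$ (where both sides equal $+\infty$) and your derivation of the joint law of $(S_t,S_{t-1})$ add rigor that the paper's proof leaves implicit.
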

Together, Lemma~\ref{lem:dpi} and Lemma~\ref{lem:markov_kl} reduce the problem to bounding the one-step kernel divergence $\mathrm{KL}(K_t^\star(\cdot\mid S_t)\|K_t^q(\cdot\mid S_t))$ under teacher-visited states.
We next show this per-step divergence is \emph{sparse} and only involves the committed positions.

\subsection{Per-Step KL Decomposition under Commit Policies}
To derive a \emph{structural, policy-agnostic} time$\times$position prior, we analyze the Markov kernel divergence under \emph{model-independent random commits}, where it admits an exact sparse decomposition over committed positions.
We formalize the intuition that \emph{only committed positions contribute to the per-step KL}.

Let $\pi$ be a commit-set distribution and assume it is model-independent under random commit.
Given $S_t$, we sample $C_t\sim\pi(\cdot\mid S_t)$ and then sample tokens for $i\in C_t$ from $p_{M,i}(\cdot\mid S_t)$, while copying all other positions deterministically.
The transition kernel can be expressed as:
\begin{equation}
\begin{aligned}
&K_t^M(S_{t-1}\mid S_t) \\
&=
\sum_{C_t\subseteq \mathcal{M}(S_t)}
\pi(C_t\mid S_t)\;
\left[\prod_{i\in C_t} p_{M,i}(S_{t-1}[i]\mid S_t)\right] \\
&\cdot \mathbf{1}\{S_{t-1}[\neg C_t]=S_t[\neg C_t]\}.
\end{aligned}
\end{equation}

\begin{assumption}[Disjoint-support monotone-mask transitions]
\label{ass:disjoint_support}
For any state $S_t$, committed positions always emit tokens in $\mathcal{V}$ (excluding \texttt{[MASK]}), and uncommitted positions deterministically remain \texttt{[MASK]}.
Equivalently, the mask pattern of $S_{t-1}$ uniquely determines the realized commit set $C_t$ via $\mathcal{M}(S_{t-1})=\mathcal{M}(S_t)\setminus C_t$.
Hence, conditional next-state distributions induced by different commit sets have disjoint support. Notably, the decoding of both Dream and LLaDA is \textbf{\emph{monotone by design}}: once a position is committed to a vocabulary token, it is never remasked in later steps.
\end{assumption}

\begin{proposition}[Per-step KL reduces to committed positions]
\label{prop:sparse_kl}
Assume the commit-set distribution $\pi(\cdot\mid S_t)$ is model-independent (random commit) and Assumption~\ref{ass:disjoint_support} holds.
Then for any fixed $S_t$,
\begin{equation}
\begin{aligned}
&\mathrm{KL}(K_t^\star(\cdot\mid S_t)\|K_t^q(\cdot\mid S_t)) \\
&=
\mathbb{E}_{C_t\sim \pi(\cdot\mid S_t)}
\left[
\sum_{i\in C_t}
\mathrm{KL}\!\left(p_{\star,i}(\cdot\mid S_t)\,\|\,p_{q,i}(\cdot\mid S_t)\right)
\right].
\end{aligned}
\end{equation}
\end{proposition}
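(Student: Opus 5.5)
The plan is to write both kernels as mixtures over commit sets, use Assumption~\ref{ass:disjoint_support} to show that the mixture components never overlap, and then reduce the KL of product distributions to a sum of per-position KLs. Fix $S_t$. Because $\pi$ is model-independent, both $K_t^\star(\cdot\mid S_t)$ and $K_t^q(\cdot\mid S_t)$ are mixtures over $C\subseteq\mathcal{M}(S_t)$ with the \emph{same} mixing weights $\pi(C\mid S_t)$. Only the component distributions differ: for model $M$, the component is
\[
Q^M_C(S_{t-1})=\prod_{i\in C}p_{M,i}(S_{t-1}[i]\mid S_t)\,\mathbf{1}\{S_{t-1}[\neg C]=S_t[\neg C]\}.
\]

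\textbf{Disjoint supports.} By Assumption~\ref{ass:disjoint_support}, a next state $S_{t-1}$ in the support of $Q^M_C$ satisfies $\mathcal{M}(S_{t-1})=\mathcal{M}(S_t)\setminus C$. Hence $C$ is recovered from $S_{t-1}$ by a deterministic map $c(S_{t-1})=\mathcal{M}(S_t)\setminus\mathcal{M}(S_{t-1})$. One caveat is that this requires each $p_{M,i}$ to put no mass on \texttt{[MASK]}; the assumption stipulates exactly this. So for any $S_{t-1}$, each of the sums $K_t^\star(S_{t-1}\mid S_t)$ and $K_t^q(S_{t-1}\mid S_t)$ collapses to the single term $C=c(S_{t-1})$:
\[
K^M_t(S_{t-1}\mid S_t)=\pi(c(S_{t-1})\mid S_t)\,Q^M_{c(S_{t-1})}(S_{t-1}).
\]

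\textbf{Log-ratio.} Substituting into the KL, the common factor $\pi$ cancels in the log-ratio. Grouping next states by $C$ then gives
\[
\mathrm{KL}(K^\star_t\|K^q_t)=\sum_C\pi(C\mid S_t)\,\mathrm{KL}(Q^\star_C\|Q^q_C).
\]
This is the step that needs the most care, and I expect it to be the main obstacle. Without disjoint support one only gets the convexity inequality $\le$, not equality. Equality relies on the identifiability of $C$ from $S_{t-1}$. I would also handle the conventions $0\log 0=0$ and absolute continuity here: if some $p_{q,i}$ vanishes where $p_{\star,i}$ does not, both sides are $+\infty$ consistently.

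\textbf{Product factorization.} For fixed $C$, $Q^M_C$ is a product of independent categorical distributions over the positions $i\in C$, times a point mass on the uncommitted coordinates. The point mass is identical for both models and contributes zero. KL is additive over independent product measures, so
\[
\mathrm{KL}(Q^\star_C\|Q^q_C)=\sum_{i\in C}\mathrm{KL}\bigl(p_{\star,i}(\cdot\mid S_t)\,\|\,p_{q,i}(\cdot\mid S_t)\bigr).
\]
Rewriting $\sum_C\pi(C\mid S_t)(\cdot)$ as $\mathbb{E}_{C_t\sim\pi(\cdot\mid S_t)}$ then yields the claimed identity.
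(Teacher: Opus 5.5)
Your proposal is correct and follows essentially the same route as the paper: both kernels are mixtures with the shared weights $\pi(C_t\mid S_t)$, Assumption~\ref{ass:disjoint_support} gives disjoint component supports so the KL of the mixtures equals the mixture of the component KLs, and each component KL is additive over the independent committed coordinates, with the deterministic copies contributing zero. You only reverse the order of the last two steps, and you make explicit what the paper leaves implicit, namely the identifiability map $c(S_{t-1})=\mathcal{M}(S_t)\setminus\mathcal{M}(S_{t-1})$ and the cancellation of $\pi$ in the log-ratio (your $+\infty$ caveat never arises, since softmax outputs are strictly positive).
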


Combining Lemmas~\ref{lem:dpi}--\ref{lem:markov_kl} and Proposition~\ref{prop:sparse_kl} yields an upper bound:
\begin{equation}
\begin{aligned}
&\mathrm{KL}(\mu^\star\|\mu^q) \\
&\le
\sum_{t=1}^T \mathbb{E}_{S_t\sim d_t^\star}\mathbb{E}_{C_t}
\left[\sum_{i\in C_t}\mathrm{KL}(p_{\star,i}(\cdot\mid S_t)\|p_{q,i}(\cdot\mid S_t))\right]. \label{eq:bound}
\end{aligned}
\end{equation}
Proposition~\ref{prop:sparse_kl} shows that, under model-independent commits, the per-step divergence decomposes into a sum of token-level KL terms \emph{only on committed positions}.
This yields a ``sum over time, then sum over positions'' structure, which directly motivates additive time$\times$position accumulation.

\begin{remark}[Inference-time policies vs.\ random-commit analysis]
Proposition~\ref{prop:sparse_kl} assumes a model-independent commit distribution $\pi$ (random commit), which yields an exact sparse decomposition over committed positions and motivates additive time$\times$position accumulation.
We intentionally analyze and probe under this \emph{training-aligned} process: random masking/commits match the corruption used in dLLM pretraining/SFT (e.g., independent random masks; cf.\ LLaDA~\citep{nie2025llada}), and provide broad, policy-agnostic coverage of partially-masked states.
In practice, inference adopts model-dependent commit policies (Dream: entropy-driven; LLaDA: confidence-driven), so teacher and quantized models may induce different commit sets, introducing an additional policy-shift term (Appendix~\ref{appendix:model_dependent_commit_policies}).
Stage~I yields a policy-agnostic structural prior, and we verify improvements under each model’s default inference-time policy via reduced teacher-forced commit-step flips and suppressed non-teacher-forced error amplification (Section~\ref{sec:mech_diagnostics}).
\end{remark}

With Proposition~\ref{prop:sparse_kl}, Eq.~\eqref{eq:bound} makes explicit which $(t,i)$ updates can contribute to the trajectory divergence.
We therefore construct a static position prior by (i) Monte Carlo frontier-hit accumulation with time reweighting $\lambda_0(t)$, and (ii) a masked-stage reliability gate to stabilize the estimate for off-policy reuse in Stage~II; see Appendix~\ref{app:why_these_weights}.

\begin{table*}[t]
\centering
\caption{\textbf{W4A4 results on the LLaDA family.}
Accuracy (\%) on PIQA, BoolQ, WinoGrande, ARC-E, ARC-C, HellaSwag, TruthfulQA-MC2, MMLU, HumanEval, and GSM8K. Higher is better.}
\label{tab:main_llada}
{\renewcommand{\arraystretch}{0.92}
\setlength{\tabcolsep}{3.7pt}
\small
\begin{tabular}{llccccccccccc}
\toprule
\textbf{Model} & \textbf{Method} & \textbf{PIQA} & \textbf{BoolQ} & \textbf{Wino.} & \textbf{ARC-E} & \textbf{ARC-C} & \textbf{Hella.} & \textbf{Truth} & \textbf{MMLU} & \textbf{Human.} & \textbf{GSM8K} & \textbf{Avg.} \\
\midrule
\multirow{5}{*}{\textbf{LLaDA-Base}}
& \cellcolor{gray!20}FP & \cellcolor{gray!20}74.84 & \cellcolor{gray!20}63.73 & \cellcolor{gray!20}73.64 & \cellcolor{gray!20}75.08 & \cellcolor{gray!20}47.44 & \cellcolor{gray!20}72.90 & \cellcolor{gray!20}45.30 & \cellcolor{gray!20}65.80 & \cellcolor{gray!20}33.54 & \cellcolor{gray!20}68.92 & \cellcolor{gray!20}62.12 \\
& RTN               & 69.26 & 64.01 & 61.80 & 64.31 & 34.47 & 60.41 & 38.80 & 51.05 & 12.20 & 35.03 & 49.13 \\
& QuaRot      & 73.61 & \textbf{65.69} & 72.22 & 72.35 & 46.16 & 69.96 & 43.35 & 62.04 & 25.00 & 57.39 & 58.78 \\
& FlatQuant         & 74.16 & 62.51 & 72.16 & 73.23 & 46.84 & 71.15 & 42.90 & 63.80 & 29.70 & 57.24 & 59.37 \\
& \textbf{FAIR-Calib} & \textbf{74.92} & 62.69 & \textbf{72.93} & \textbf{74.45} & \textbf{48.38} & \textbf{71.27} & \textbf{43.91} & \textbf{64.11} & \textbf{33.54} & \textbf{64.75} & \textbf{61.09} \\
\midrule
\multirow{5}{*}{\textbf{LLaDA-Instruct}}
& \cellcolor{gray!20}FP & \cellcolor{gray!20}82.86 & \cellcolor{gray!20}88.38 & \cellcolor{gray!20}77.35 & \cellcolor{gray!20}94.00 & \cellcolor{gray!20}88.47 & \cellcolor{gray!20}76.89 & \cellcolor{gray!20}48.47 & \cellcolor{gray!20}64.36 & \cellcolor{gray!20}46.95 & \cellcolor{gray!20}70.36 & \cellcolor{gray!20}73.81 \\
& RTN               & 77.26 & 84.68 & 70.09 & 88.71 & 79.66 & 65.93 & 45.17 & 57.27 & 37.80 & 61.71 & 66.83 \\
& QuaRot     & 81.23 & 87.98 & 75.06 & \textbf{93.30} & 85.42 & 73.27 & \textbf{47.77}  & 61.99 & 40.24 & 67.10 & 71.34 \\
& FlatQuant    & 81.83 & 87.98 & 75.53 & 92.55 & 87.31 & 74.15 & 46.39 & \textbf{62.58} & 39.02 & 66.45 & 71.38 \\
& \textbf{FAIR-Calib} & \textbf{82.10} & \textbf{88.29} & \textbf{76.01} & 92.77 & \textbf{87.46} & \textbf{74.50} & 47.44 & \textbf{62.58} & \textbf{43.29} & \textbf{69.60} & \textbf{72.40} \\
\midrule
\multirow{5}{*}{\textbf{LLaDA-1.5}}
& \cellcolor{gray!20}FP & \cellcolor{gray!20}82.97 & \cellcolor{gray!20}88.47 & \cellcolor{gray!20}77.27 & \cellcolor{gray!20}93.12 & \cellcolor{gray!20}87.12 & \cellcolor{gray!20}76.91 & \cellcolor{gray!20}48.76 & \cellcolor{gray!20}64.51 & \cellcolor{gray!20}46.95 & \cellcolor{gray!20}69.22 & \cellcolor{gray!20}73.53 \\
& RTN               & 76.28 & 85.99 & 70.09 & 88.71 & 82.37 & 66.53 & 46.39 & 57.78 & 38.41 & 61.49 & 67.40 \\
& QuaRot      & 80.74 & 87.07 & 75.53 & 91.89 & 85.82 & 73.96 & 47.46 & 61.97 & 42.07 & 63.99 & 71.05 \\
& FlatQuant         & 81.28 & 86.92 & \textbf{76.01} & 92.95 & 85.12 & 73.97 & 46.48 & 62.96 & 46.34 & 67.40 & 71.94 \\
& \textbf{FAIR-Calib} & \textbf{82.70} & \textbf{87.89} & \textbf{76.01} & \textbf{94.00} & \textbf{86.10} & \textbf{74.29} & \textbf{48.01} & \textbf{63.12} & \textbf{46.95} & \textbf{68.46} & \textbf{72.75} \\
\bottomrule
\end{tabular}}
\end{table*}

\begin{table*}[t]
\centering
\caption{\textbf{W4A4 results on the Dream family.}
Accuracy (\%) on the same benchmark suite. Higher is better.}
\label{tab:main_dream}
{\renewcommand{\arraystretch}{0.92}
\setlength{\tabcolsep}{3.7pt}
\small
\begin{tabular}{llccccccccccc}
\toprule
\textbf{Model} & \textbf{Method} & \textbf{PIQA} & \textbf{BoolQ} & \textbf{Wino.} & \textbf{ARC-E} & \textbf{ARC-C} & \textbf{Hella.} & \textbf{Truth} & \textbf{MMLU} & \textbf{Human.} & \textbf{GSM8K} & \textbf{Avg.} \\
\midrule
\multirow{5}{*}{\textbf{Dream-Base}}
& \cellcolor{gray!20}FP & \cellcolor{gray!20}75.41 & \cellcolor{gray!20}84.46 & \cellcolor{gray!20}73.32 & \cellcolor{gray!20}83.84 & \cellcolor{gray!20}59.13 & \cellcolor{gray!20}73.65 & \cellcolor{gray!20}44.17 & \cellcolor{gray!20}71.36 & \cellcolor{gray!20}58.54 & \cellcolor{gray!20}76.19 & \cellcolor{gray!20}70.01 \\
& RTN               & 61.70 & 59.63 & 55.64 & 51.60 & 34.30 & 55.20 & 41.42 & 36.94 & 6.71 & 16.83 & 42.00 \\
& QuaRot      & 72.52 & 74.77 & 63.46 & 75.72 & 48.72 & 66.92 & 40.40 & 59.74 & 23.17 & 49.51 & 57.49 \\
& FlatQuant    & 71.65 & 79.14 & 65.98 & 77.19 & 50.51 & 69.54 & 43.23 & 64.96 & 38.41 & 60.20 & 62.08 \\
& \textbf{FAIR-Calib} & \textbf{73.07} & \textbf{80.31} & \textbf{69.53} & \textbf{80.51} & \textbf{53.92} & \textbf{70.60} & \textbf{43.40} & \textbf{66.91} & \textbf{41.46} & \textbf{66.64} & \textbf{64.64} \\
\midrule
\multirow{5}{*}{\textbf{Dream-Instruct}}
& \cellcolor{gray!20}FP & \cellcolor{gray!20}75.79 & \cellcolor{gray!20}85.66 & \cellcolor{gray!20}72.69 & \cellcolor{gray!20}84.68 & \cellcolor{gray!20}61.43 & \cellcolor{gray!20}73.89 & \cellcolor{gray!20}47.12 & \cellcolor{gray!20}69.79 & \cellcolor{gray!20}57.93 & \cellcolor{gray!20}81.10 & \cellcolor{gray!20}71.01 \\
& RTN     &  63.33 & 59.94 & 56.04 & 59.51 & 40.53 & 56.54 & 41.12 & 41.28 & 13.05 & 26.29 & 45.76 \\
& QuaRot   & 71.11 & 76.91 & 64.96 & 78.20 & 52.30 & 67.28 & 39.92 & 64.17 & 32.32 & 64.25 & 61.14 \\
& FlatQuant    & 71.82 & 79.66 & 66.38 & 80.89 & 55.29 & 69.82 & 44.27 & \textbf{64.18} & 41.46 & 66.03 & 63.98 \\
& \textbf{FAIR-Calib} & \textbf{73.12} & \textbf{81.77} & \textbf{70.80} & \textbf{83.46} & \textbf{58.02} & \textbf{70.98} & \textbf{46.99} & 64.04 & \textbf{44.51} & \textbf{72.86} & \textbf{66.66} \\
\bottomrule
\end{tabular}}
\end{table*}

\subsection{From Token KL to Squared Logit Error via Smoothness}
Define the log-sum-exp function $f(z)=\log\sum_{v}\exp(z_v)$.
Its gradient is $\nabla f(z)=\mathrm{softmax}(z)$ and its Hessian is
\begin{equation}
\nabla^2 f(z)=\mathrm{Diag}(p)-pp^\top,\quad p=\mathrm{softmax}(z),
\end{equation}
whose operator norm satisfies $\|\nabla^2 f(z)\|_2\le \frac{1}{2}$ for all $z$.
Hence $f$ is $(1/2)$-smooth. 
This property holds because the maximum eigenvalue of a covariance matrix of a categorical distribution is at most $(1/2)$.

\begin{lemma}[Softmax KL is a Bregman divergence bounded by squared logit error]
\label{lem:softmax_bregman}
Let $p=\mathrm{softmax}(z)$ and $q=\mathrm{softmax}(z')$.
Then
\begin{equation}
\mathrm{KL}(p\|q)= D_f(z'\|z) \le \frac{1}{4}\|z'-z\|_2^2,
\end{equation}
where $D_f(u\|v)=f(u)-f(v)-\langle \nabla f(v),u-v\rangle$ is the Bregman divergence of $f$.
\end{lemma}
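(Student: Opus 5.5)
The proof has two parts: an exact algebraic identity, then a curvature bound via the smoothness of $f$ stated just above.

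\textbf{Step 1: the identity $\mathrm{KL}(p\|q)=D_f(z'\|z)$.} Write $\log p_v = z_v - f(z)$ and $\log q_v = z'_v - f(z')$. Then
\begin{equation*}
\mathrm{KL}(p\|q)=\sum_v p_v\big[(z_v - z'_v) - f(z) + f(z')\big]
= f(z') - f(z) - \langle p,\, z'-z\rangle ,
\end{equation*}
using $\sum_v p_v=1$. Since $\nabla f(z)=\mathrm{softmax}(z)=p$, the right-hand side is exactly $D_f(z'\|z)$ with $u=z'$ and $v=z$. This step is routine.

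\textbf{Step 2: the quadratic bound.} Apply Taylor's theorem with integral remainder along the segment $z+s(z'-z)$, $s\in[0,1]$:
\begin{equation*}
D_f(z'\|z)=\int_0^1 (1-s)\,(z'-z)^\top \nabla^2 f\big(z+s(z'-z)\big)(z'-z)\,ds .
\end{equation*}
The operator-norm bound $\|\nabla^2 f\|_2\le \tfrac12$ holds at every point of the segment. Together with $\int_0^1(1-s)\,ds=\tfrac12$, this gives $D_f(z'\|z)\le \tfrac12\cdot\tfrac12\|z'-z\|_2^2=\tfrac14\|z'-z\|_2^2$.

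\textbf{Main obstacle: justifying $\|\mathrm{Diag}(p)-pp^\top\|_2\le\tfrac12$.} The paper only asserts this bound, so I would prove it. The matrix is the covariance of the one-hot vector $e_V$ with $V\sim p$, hence it is positive semidefinite.

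For a unit vector $a$, the quadratic form $a^\top(\mathrm{Diag}(p)-pp^\top)a$ equals $\mathrm{Var}(a_V)$. Write $a_V=\sum_v a_v \mathbf 1\{V=v\}$ and center it at $m=\tfrac12(\max_v a_v+\min_v a_v)$. Then
\begin{equation*}
\mathrm{Var}(a_V)\le \mathbb E\big[(a_V-m)^2\big]\le \tfrac14\big(\max_v a_v-\min_v a_v\big)^2 .
\end{equation*}
For a unit vector, $(\max_v a_v-\min_v a_v)^2\le (|a_i|+|a_j|)^2\le 2(a_i^2+a_j^2)\le 2$, where $i,j$ are the maximizing and minimizing indices. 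The variance is therefore at most $\tfrac12$.

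Because the Hessian is PSD, its operator norm equals its largest eigenvalue, which is this supremum of the quadratic form. This closes the argument.
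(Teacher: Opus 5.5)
Your proof is correct and follows the same route as the paper: the log-sum-exp identity $\mathrm{KL}(p\|q)=D_f(z'\|z)$, then the $L$-smooth Bregman bound with $L=\tfrac12$. You also prove the pieces the paper only cites or asserts, namely the integral-remainder form of the smoothness bound and a Popoviciu-type variance argument for $\|\mathrm{Diag}(p)-pp^\top\|_2\le\tfrac12$; both are valid, including the trivial case where the maximizing and minimizing indices coincide.
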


\subsection{Bridging to Weighted Hidden-State MSE}
Define the \emph{suffix network} $g_\ell$ as the mapping from layer-$\ell$ hidden states to logits at the same position (i.e., the remaining blocks, final normalization, and output head), so that
$z_{M,i}(S_t)=g_\ell(h_{\ell,i}^M(S_t))$ for $M\in\{\star,q\}$.
Assume $g_\ell$ is $L_\ell$-Lipschitz on the calibration domain:
$\|g_\ell(u)-g_\ell(v)\|_2\le L_\ell\|u-v\|_2$.
Then
$\|z_{q,i}(S_t)-z_{\star,i}(S_t)\|_2^2 \le L_\ell^2\|h_{\ell,i}^q(S_t)-h_{\ell,i}^\star(S_t)\|_2^2$.
Combining Lemma~\ref{lem:softmax_bregman} with Proposition~\ref{prop:sparse_kl} yields
\begin{equation}\label{eq:hidden_bound}
\begin{aligned}
&\mathrm{KL}(\mu^\star\|\mu^q)\ \\ 
&\le\ \frac{L_\ell^2}{4}\sum_{t=1}^T
\mathbb{E}_{S_t\sim d_t^\star}\mathbb{E}_{C_t}\!\left[\sum_{i\in C_t}\|h_{\ell,i}^q(S_t)-h_{\ell,i}^\star(S_t)\|_2^2\right].
\end{aligned}
\end{equation}

Details of the suffix-network Lipschitz bridge are provided in Appendix~\ref{remark:suffix_net}.
This provides a principled justification for minimizing a \emph{weighted hidden-state MSE} as a KL-consistent surrogate, and explains why directly applying softmax-KL to hidden features is unnecessary.

\section{Experiments}

\subsection{Implementation Details}
\label{sec:implementation_details}

We evaluate W4A4 on LLaDA and Dream, comparing RTN/QuaRot/FlatQuant under matched PTQ settings.
FAIR-Calib probes a fixed prior from the FP teacher and performs weighted layer-wise calibration; details in Appendix~\ref{app:impl}.

\subsection{Main Results}
\label{sec:main_results}

We report W4A4 accuracy on a broad benchmark suite for LLaDA and Dream, comparing FP, RTN, QuaRot, FlatQuant, and FAIR-Calib under the implementation details described above.
Tables~\ref{tab:main_llada} and~\ref{tab:main_dream} summarize the results for the LLaDA and Dream families, respectively.
Overall, FAIR-Calib consistently improves over baselines while using a shorter calibration sequence length (1024) by default.

\subsection{Ablation Study}
\label{sec:ablation_study}
\textbf{Ablation on components.}
Table~\ref{tab:method_comparison} shows that using either \textbf{frontier-hit only} or \textbf{masked-stage reliability only} improves over the uniform PTQ baseline, while combining them (\textbf{FAIR-Calib}) achieves the best average accuracy on Dream-Base across the 10-benchmark suite.
This suggests the two signals are complementary:
frontier hits prioritize positions with high downstream impact due to irreversible commits,
while masked-stage reliability provides an offline reliability prior that, in expectation,
downweights positions where the teacher is frequently ambiguous during masking,
thereby reducing the finite-sample noise when estimating and reusing a static prior across corpora in Stage~II. 

\begin{table}[htbp]
\centering
\caption{\textbf{Ablation on components.}
    Average W4A4 accuracy (\%) on Dream-Base over the same 10-benchmark suite.
    \textbf{frontier-hit only} keeps only the write-frontier indicator term with $\lambda_0(t)$;
    \textbf{masked-stage reliability only} keeps only the masked-stage reliability term with $\lambda_1$;
    \textbf{FAIR-Calib} combines both.}
\label{tab:method_comparison}
\small
\begin{tabular}{lc}
\hline
\textbf{Method} & \textbf{Avg.} \\
\hline
baseline        & 61.76 \\
frontier-hit only        & 63.12 \\
masked-stage reliability only  & 62.89 \\
FAIR-Calib      & \textbf{64.64} \\
\hline
\end{tabular}
\end{table}

\textbf{Sensitivity to probing budget.}
Table~\ref{tab:probe_samples} varies the probing sample size $N_{\mathrm{probe}}$ used in Stage~I.
Accuracy improves with $N_{\mathrm{probe}}$ and saturates around $512$--$1024$ samples, suggesting that $\bar{w}$ can be estimated reliably with a moderate probing budget.
Unless stated otherwise, we use $N_{\mathrm{probe}}{=}512$.

\textbf{Effect of time weighting.}
Table~\ref{tab:lambda_schedule} compares different $\lambda_0(t)$ schedules for the frontier-hit term.
Early-boost performs best, consistent with the intuition that earlier commits influence more subsequent refinement steps and thus have higher downstream impact.
Late-boost underperforms, suggesting that emphasizing late commits is less effective at mitigating irreversible error amplification.

\begin{table}[t]
\centering
\caption{\textbf{Probing and time-weighting ablations.}
    Average W4A4 accuracy (\%) on Dream-Base over the same 10-benchmark suite.
    \textbf{Left:} varying the probing budget $N_{\mathrm{probe}}$ for estimating the prior $\bar{w}$.
    \textbf{Right:} the frontier-hit time-weight schedule $\lambda_0(t)$.}
\label{tab:combined_results}
\small
\begin{subtable}{0.22\textwidth}
\centering
\small
\caption{\textbf{Probing budget $N_{\mathrm{probe}}$.}}
\label{tab:probe_samples}
\begin{tabular}{lc}
\toprule
\textbf{$N_{probe}$} & \textbf{Avg.} \\
\midrule
128  & 63.15 \\
256  & 63.56 \\
512  & \textbf{64.64} \\
1024 & 64.63 \\
\bottomrule
\end{tabular}
\end{subtable}%
\hfill
\begin{subtable}{0.22\textwidth}
\centering
\small
\caption{\textbf{Time weighting $\lambda_0(t)$.}}
\label{tab:lambda_schedule}
\begin{tabular}{lc}
\toprule
\textbf{$\lambda_0$ schedule} & \textbf{Avg.} \\
\midrule
w/o $\lambda_0$  & 62.89 \\
uniform       & 63.21 \\
late-boost    & 62.58 \\
early-boost   & \textbf{64.64} \\
\bottomrule
\end{tabular}
\end{subtable}
\end{table}

\subsection{Mechanistic Diagnostics}
\label{sec:mech_diagnostics}

\textbf{False-commit error amplification.}
We first measure the end-to-end consequence under \emph{inference-time} commit policy.
We track the per-step discrepancy w.r.t the FP teacher (probability MSE) and mark the first \emph{false commit} where the written token disagrees with the teacher.
Figure~\ref{fig:false_commit_trace} shows that, for the uniformly calibrated baseline, a single false commit is typically followed by an abrupt rise in discrepancy that persists over subsequent refinement steps, consistent with irreversibility at the write frontier. In contrast, \textbf{FAIR-Calib} substantially \textbf{suppresses} this error growth, yielding a flatter discrepancy trajectory after the first wrong write. 
This motivates a controlled test that isolates whether quantization increases \emph{write-decision flips} at the frontier. 

\begin{figure}
    \centering
    \includegraphics[width=0.86\linewidth]{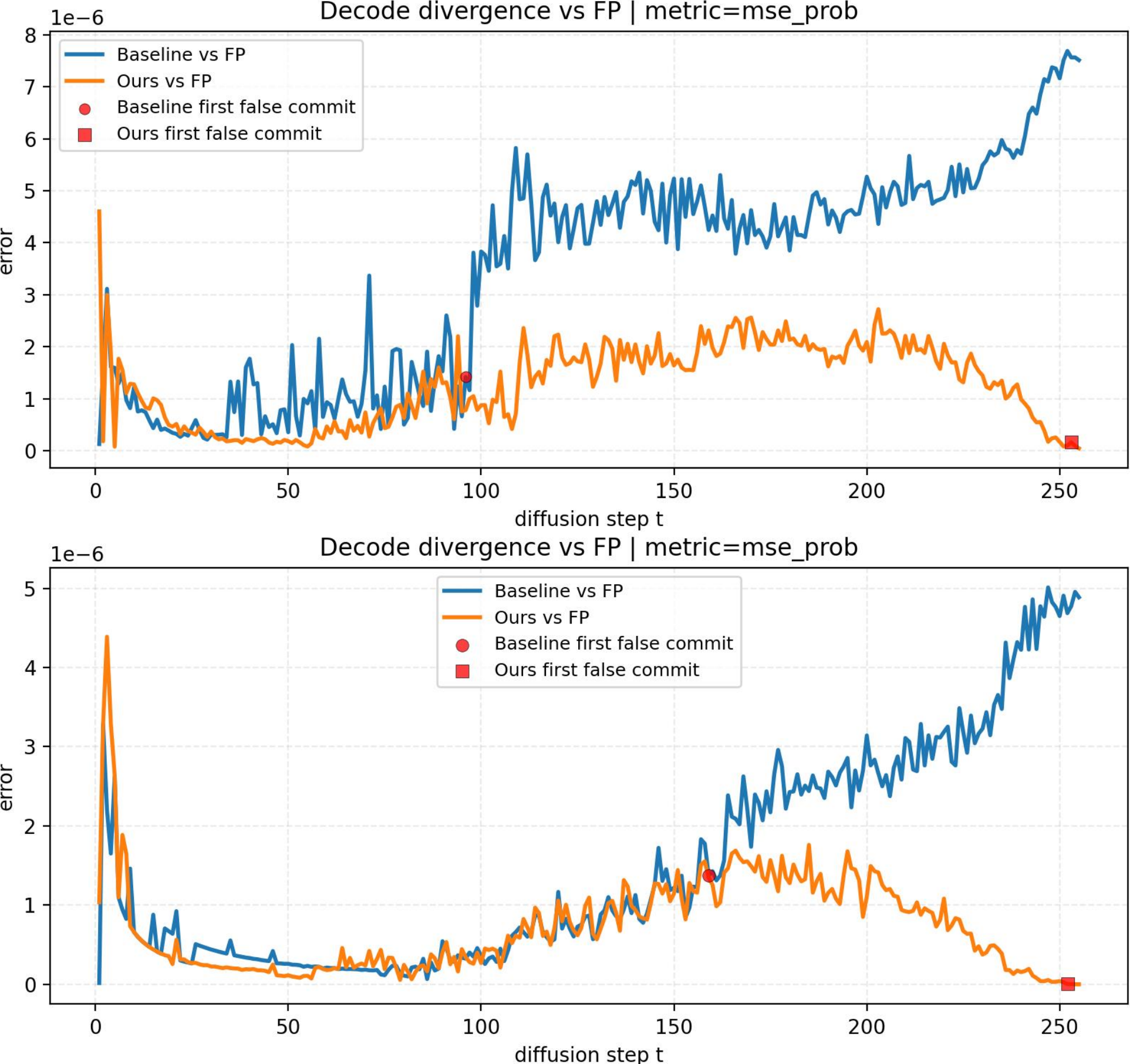}
    \caption{Probability MSE w.r.t the FP ($\mathrm{mse\_prob}$) as a function of diffusion step. Wrong commit amplifies downstream error, while FAIR-Calib suppresses this error.}
    \label{fig:false_commit_trace}
\end{figure}

\textbf{Teacher-forced commit-step flips.} 
To isolate write-decision errors from state-distribution shift, we evaluate quantized models on teacher-forced intermediate states along diffusion steps. We compare the token a quantized model would write at the teacher’s commit positions against the teacher’s write decision, and count mismatches per sequence (a flip at $(t,i)$).
FAIR-Calib reduces the average flip count from \(2.9\pm1.4\) to \(1.9\pm0.9\) over \(N{=}32\) samples (Figure~\ref{fig:mismatch_count}), indicating improved alignment at the write frontier.

\begin{figure}[t]
    \centering
    \includegraphics[width=0.86\linewidth]{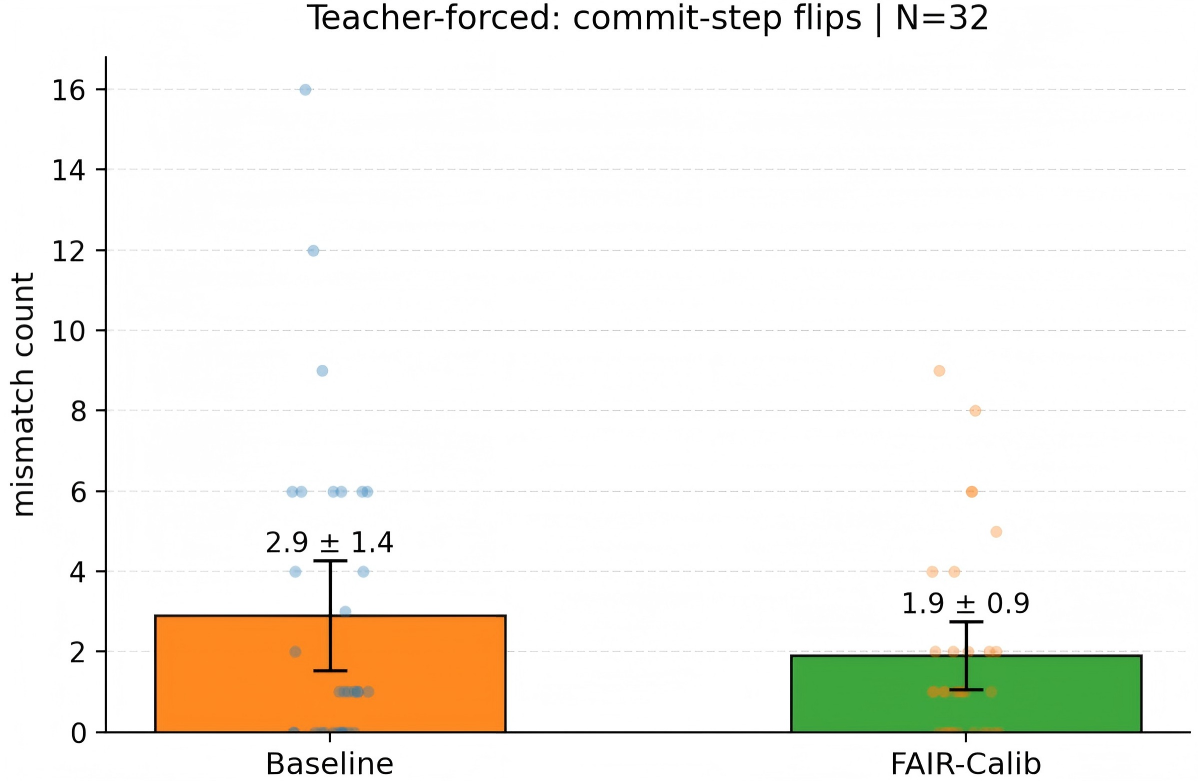}
    \caption{\textbf{Teacher-forced commit-step flips.}
    On teacher-forced states, we count write-decision mismatches w.r.t the FP teacher at commit steps (mean±std over N=32). FAIR-Calib reduces flip, showing better alignment at the irreversible write frontier.}
    \label{fig:mismatch_count}
\end{figure}

\textbf{Cross-corpus rank consistency of position weights.}
We find that the probed prior $\bar w$ is largely \emph{mechanism-driven} rather than corpus-specific: mean normalized $\bar w$ curves computed on GSM8K-CoT and WikiText2 exhibit substantial rank consistency (Spearman).
Notably, the masked-stage reliability gate is important for transferability, while the frontier-hit-only variant shows near-zero cross-corpus agreement. Detailed diagnostic is provided in Appendix~\ref{app:cross_corpus}. 

\section{Conclusion}
We propose FAIR-Calib to tackle the instability in dLLM quantization caused by irreversible commit decisions. By integrating a frontier-aware reliability prior with off-policy hidden-state alignment, our framework significantly reduces commit-step flips and downstream error propagation. Empirically, FAIR-Calib outperforms existing PTQ methods (e.g., QuaRot, FlatQuant) on Dream and LLaDA at W4A4 precision, providing an efficient and robust solution for compressing diffusion-based language models.

\section*{Acknowledgements}
The work was supported by the National Key Research and Development Program of China (No.2023YFC3306401) and National Natural Science Foundation of China 62576018. This research was also supported by Zhejiang Provincial Natural Science Foundation of China under Grant No. LD24F020007, Beijing Natural Science Foundation L244043; The experimental part was supported by the Ministry of Economic Development of the Russian Federation (agreement identifier 000000C313925P3X0002, grant No 139-15-2025-004 dated 17.04.2025).



\section*{Impact Statement}

This work studies post-training quantization for diffusion large language models to reduce memory footprint and inference compute, which can lower deployment cost and energy consumption and improve accessibility on resource-constrained hardware. The proposed method does not introduce new model capabilities; it focuses on calibration procedures and diagnostic metrics for quantized inference. As with many compression techniques, quantization may alter model outputs in subtle ways, so we recommend task-relevant evaluation and monitoring before deployment in user-facing settings. Our experiments use standard public benchmarks and do not involve collecting or inferring sensitive personal information. Overall, we expect the primary impact of this work to be improved efficiency and reproducibility for deploying diffusion-style LLMs.


\bibliography{example_paper}
\bibliographystyle{icml2026}

\newpage
\appendix
\onecolumn
\renewcommand\thefigure{\Alph{figure}} 
\setcounter{figure}{0}
\renewcommand\thetable{\Alph{table}} 
\setcounter{table}{0}
\renewcommand{\thealgorithm}{\Alph{algorithm}}
\setcounter{equation}{0}
\renewcommand{\theequation}{\thesection.\arabic{equation}}
\section{Algorithmic Details of FAIR-Calib}
For completeness, we provide the full pseudo-code of FAIR-Calib in Algorithm~\ref{alg:fair_calib}, including teacher-probed stability weight construction and the subsequent static weighted calibration.
\begin{algorithm}[h]
\caption{FAIR-Calib: teacher-probed stability weights + static weighted calibration}
\label{alg:fair_calib}
\begin{algorithmic}
\REQUIRE FP teacher $M^\star$, quant model $M^q$, probing set $\mathcal{D}_{\mathrm{probe}}$, calibration set $\mathcal{D}_{\mathrm{cal}}$, steps $T$, gen window $\mathcal{G}$, schedules $\lambda_0(t),\lambda_1$
\ENSURE Fixed prior $\bar{w}$ and calibrated quant parameters of $M^q$
\vspace{0.2em}
\STATE \textbf{Stage I: compute a global prior $\bar{w}$ by teacher probing}
\FOR{sample $(x,y)\sim \mathcal{D}_{\mathrm{probe}}$}
    \STATE Run FP teacher rollout with \emph{random commit} to obtain $\{S_t\}_{t=0}^T$, $\{\widehat{C}_t\}$, and $\{\widetilde{c}_{t,i}\}$
    \STATE Accumulate $w^{(x,y)}_i \gets \sum_{t=1}^{T}\lambda_0(t)\mathbf{1}\{i\in \widehat{C}_t\} + \lambda_1\,\widetilde{c}_{t,i}\mathbf{1}\{i\in \mathcal{M}(S_t)\}$
    \STATE Normalize $w^{(x,y)}$ (e.g., divide by $\max_i w^{(x,y)}_i$) and add to global sum
\ENDFOR
\STATE Window-align and normalize the aggregated weights to obtain $\bar{w}$
\vspace{0.2em}
\STATE \textbf{Stage II: layer-wise static calibration with fixed $\bar{w}$}
\FOR{layer/block $\ell$ in a sequential layer-wise order}
    \FOR{sample $(x,y)\sim \mathcal{D}_{\mathrm{cal}}$}
        \STATE Teacher-forcing forward (full real tokens) through $M^\star$ and current $M^q$
        \STATE $\mathcal{L}_\ell \gets \sum_{i=1}^{N} \bar{w}_i\,\|h_{\ell,i}^q(x,y)-h_{\ell,i}^\star(x,y)\|_2^2$
        \STATE Update only quantization/calibration parameters of layer/block $\ell$ by minimizing $\mathcal{L}_\ell$
    \ENDFOR
\ENDFOR
\end{algorithmic}
\end{algorithm}

\section{Additional proofs and remarks}
\label{appendix:proofs_remarks}
\subsection{Remark: model-dependent commit policies}
\label{appendix:model_dependent_commit_policies}
For confidence-driven commit policies (e.g., top-$k$ on confidence), the commit-set distribution $\pi(C_t\mid S_t,M)$ depends on model outputs, and teacher vs quant may induce different $\pi^\star$ and $\pi^q$.
In that case, the per-step kernel KL admits an additional policy-shift term:
\[
\mathrm{KL}(K_t^\star\|K_t^q)
=
\mathrm{KL}(\pi^\star(\cdot\mid S_t)\|\pi^q(\cdot\mid S_t))
+
\mathbb{E}_{C_t\sim\pi^\star}\left[\sum_{i\in C_t}\mathrm{KL}(p_{\star,i}\|p_{q,i})\right],
\]
under Assumption~\ref{ass:disjoint_support}. 
Controlling $\mathrm{KL}(\pi^\star\|\pi^q)$ would require on-policy rollouts or reasoning through discrete commit selection, which is typically expensive and not aligned with layer-wise PTQ. We therefore use \emph{model-independent random commits} in Stage~I to fix $\pi$ and obtain the clean per-position decomposition in Proposition~\ref{prop:sparse_kl}, yielding a stable, policy-agnostic importance prior. Random commits also provide broad coverage of partially masked states without coupling the probe to a specific inference policy. Stage~II then focuses on reducing the token-level divergence term via the weighted hidden-state MSE surrogate.

\subsection{Remark: why additive $\lambda_0+\lambda_1$ is natural}
The Markov KL decomposition yields a sum over timesteps of per-step divergences; per-step divergences further sum over updated positions.
Thus any importance-weighted surrogate has a canonical linear (additive) accumulation form.
Multiplicative coupling is not implied by the bound and may introduce unnecessary variance.

\subsection{Remark: suffix-network Lipschitz bridge (details).}
\label{remark:suffix_net}
For any intermediate layer index $\ell$, we can write the model as a composition of a prefix up to layer $\ell$ and a \emph{suffix network} $g_\ell$ that maps $h_{\ell,i}$ to the final logits at position $i$ (i.e., blocks $\ell{+}1{:}L$, final layer normalization, and the output projection).
This avoids treating intermediate-layer features as "logits" and makes the bridge precise:
$z_{M,i}(S_t)=g_\ell(h_{\ell,i}^M(S_t))$.
Assuming $g_\ell$ is $L_\ell$-Lipschitz on the teacher-forced calibration domain is standard in PTQ analyses and can be viewed as a local smoothness condition of the remaining network around the in-domain hidden states.
Then the token-level KL at committed positions admits the bound
$\mathrm{KL}(p_{\star,i}\|p_{q,i})\le \frac{1}{4}\|z_{q,i}-z_{\star,i}\|_2^2\le \frac{L_\ell^2}{4}\|h_{\ell,i}^q-h_{\ell,i}^\star\|_2^2$,
and plugging this into Proposition~\ref{prop:sparse_kl} yields Eq.~\eqref{eq:hidden_bound}.
In Stage~II, we calibrate blocks layer-wise by directly minimizing a position-weighted hidden-state MSE, which reduces these per-position hidden discrepancies and thus controls an explicit upper bound on the accumulated token divergences on the write frontier.

\subsection{Remark: Off-policy Stage-II as a practical surrogate}
\label{rmk:off_policy}
The bound above is stated on teacher-visited masked states $S_t\sim d_t^\star$ under the probing policy (random commit), while Stage~II calibration uses teacher-forced fully observed tokens for efficiency.
Our goal here is to motivate the \emph{additive time$\times$position} structure and a \emph{KL-consistent} feature-space surrogate: reducing hidden/logit discrepancies at high-$\bar w$ positions decreases a principled upper bound on token-level divergences on the write frontier, up to a domain-mismatch residual.
We empirically validate that this surrogate indeed improves frontier behavior via reduced teacher-forced commit-step flips and suppressed post-commit error amplification (Section~\ref{sec:mech_diagnostics}).

\subsection{Proof of Lemma~\ref{lem:dpi}}
\label{proof:dpi}
\begin{proof}
By definition, $\mu^M = g_\#\mathbb{P}^M$ is the pushforward measure of $\mathbb{P}^M$ through $g$.
KL divergence contracts under measurable mappings (data processing inequality), hence
$\mathrm{KL}(g_\#\mathbb{P}^\star\|g_\#\mathbb{P}^q) \le \mathrm{KL}(\mathbb{P}^\star\|\mathbb{P}^q)$.
\end{proof}

\subsection{Proof of Lemma~\ref{lem:markov_kl}}
\label{proof:markov_kl}
\begin{proof}
Using the chain rule for KL on path distributions:
\begin{align}
\mathrm{KL}(\mathbb{P}^\star\|\mathbb{P}^q)
&=
\mathbb{E}_{\tau\sim\mathbb{P}^\star}\left[
\log\frac{p(S_T)\prod_{t}K_t^\star(S_{t-1}\mid S_t)}{p(S_T)\prod_{t}K_t^q(S_{t-1}\mid S_t)}
\right] =
\sum_{t=1}^T
\mathbb{E}_{\tau\sim\mathbb{P}^\star}\left[
\log\frac{K_t^\star(S_{t-1}\mid S_t)}{K_t^q(S_{t-1}\mid S_t)}
\right].
\end{align}
Conditioning on $S_t$ under $\mathbb{P}^\star$ yields
\begin{align}
\mathbb{E}_{\tau\sim\mathbb{P}^\star}\left[
\log\frac{K_t^\star(S_{t-1}\mid S_t)}{K_t^q(S_{t-1}\mid S_t)}
\right] 
&=
\mathbb{E}_{S_t\sim d_t^\star}\left[
\mathbb{E}_{S_{t-1}\sim K_t^\star(\cdot\mid S_t)}\log\frac{K_t^\star(S_{t-1}\mid S_t)}{K_t^q(S_{t-1}\mid S_t)}
\right] \\
&=
\mathbb{E}_{S_t\sim d_t^\star}\left[
\mathrm{KL}(K_t^\star(\cdot\mid S_t)\|K_t^q(\cdot\mid S_t))
\right].
\end{align}
Summing over $t$ completes the proof.
\end{proof}

\subsection{Proof of Proposition~\ref{prop:sparse_kl}}
\label{proof:sparse_kl}
\begin{proof}
Condition on a commit set $C_t$.
Under random commit, both models share the same mixing weight $\pi(C_t\mid S_t)$, and the conditional kernel factorizes over committed positions:
\begin{align}
&K_{t,C_t}^M(S_{t-1}\mid S_t) =
\Big[\prod_{i\in C_t} p_{M,i}(S_{t-1}[i]\mid S_t)\Big]\cdot
\mathbf{1}\{S_{t-1}[\neg C_t]=S_t[\neg C_t]\}.
\end{align}
All uncommitted positions are deterministic copies and thus contribute zero KL, yielding
\begin{align}
&\mathrm{KL}\!\left(K_{t,C_t}^\star(\cdot\mid S_t)\,\|\,K_{t,C_t}^q(\cdot\mid S_t)\right) =
\sum_{i\in C_t}\mathrm{KL}\!\left(p_{\star,i}(\cdot\mid S_t)\,\|\,p_{q,i}(\cdot\mid S_t)\right).
\end{align}
Finally, by Assumption~\ref{ass:disjoint_support}, the supports of $\{K_{t,C_t}^M(\cdot\mid S_t)\}_{C_t}$ are disjoint across different $C_t$.
Therefore, the KL between the mixtures equals the mixture of KLs (no log-sum slack), giving the desired equality after taking expectation over $C_t$.
\end{proof}

\subsection{Proof of Lemma~\ref{lem:softmax_bregman}}
\label{proof:softmax_bregman}
\begin{proof}
A standard identity gives $\mathrm{KL}(\mathrm{softmax}(z)\|\mathrm{softmax}(z'))=D_f(z'\|z)$.
For an $L$-smooth convex function, $D_f(u\|v)\le \frac{L}{2}\|u-v\|_2^2$.
Here $L=1/2$, thus $D_f(z'\|z)\le \frac{1}{4}\|z'-z\|_2^2$.
\end{proof}

\subsection{Why these weights: reweighting the KL upper bound}
\label{app:why_these_weights}
The bound (Eq.~\ref{eq:bound})
suggests an additive time$\times$position structure that different positions contribute \emph{unequally} to the output divergence: a position matters only when it is updated, and its impact depends on both when it is updated and how sensitive its token distribution is to perturbations.
This motivates estimating \emph{structural importance} over $(t,i)$ and then aggregating it into a static position prior.
More broadly, reweighting across diffusion timesteps admits a variational interpretation: reweighted diffusion losses correspond to tighter time-dependent variational bounds and can reduce data--model KL, providing theoretical support for principled (monotone) time reweighting beyond uniform ELBO weighting~\citep{kingma2023understanding_reweight,shi2025demystifying_reweight}. 

\textbf{(A) Structural importance via frontier hits.}
Under model-independent random commits, $\mathbf{1}\{i\in\widehat{C}_t\}$ is an unbiased indicator of whether position $i$ contributes to the per-step kernel divergence at time $t$.
Thus, accumulating
\begin{equation}
w_i^{\mathrm{hit}} \triangleq \sum_{t=1}^T \lambda_0(t)\,\mathbf{1}\{i\in\widehat{C}_t\}
\end{equation}
can be viewed as a Monte Carlo estimator of the \emph{structural} time$\times$position contribution suggested by Eq.~\eqref{eq:bound},
where $\lambda_0(t)$ implements a monotone time reweighting to reflect downstream amplification of early irreversible writes.

\textbf{(B) Transferable prior estimation via masked-stage reliability.}
Stage~II performs \emph{off-policy} teacher-forcing calibration
without masks, so $\bar w$ must be a \emph{static} prior that is robust to corpus shift and finite probing budget.
We therefore add a masked-stage reliability gate
\begin{equation}
w_i^{\mathrm{rel}} \triangleq \sum_{t=1}^T \lambda_1\,\widetilde{c}_{t,i}\,\mathbf{1}\{i\in\mathcal{M}(S_t)\},
\end{equation}
where $\widetilde{c}_{t,i}$ is high when the teacher distribution is sharp while $i$ is masked.
This term is not a structural coefficient of the KL decomposition; rather, it reduces the variance of the probed prior by downweighting intrinsically ambiguous masked contexts,
which empirically improves cross-corpus rank consistency of $\bar w$ and stabilizes its reuse in Stage~II.

\textbf{Final weight.}
We set $w_i \triangleq w_i^{\mathrm{hit}}+w_i^{\mathrm{rel}}$ and window-align/normalize it to obtain the fixed prior $\bar w$ used in Stage~II.

\section{Implementation Details}
\label{app:impl}
\textbf{Models and datasets.}
We evaluate W4A4 post-training quantization on two diffusion LLM families:
\textbf{LLaDA} (Base/Instruct/1.5)~\cite{nie2025llada, zhu2025llada1p5} and \textbf{Dream} (Base/Instruct)~\cite{ye2025dream}.
Unless stated otherwise, we use each model's default inference-time commit policy for all accuracy evaluations (Dream: entropy-driven; LLaDA: confidence-driven).
We report performance on a diverse benchmark suite covering commonsense reasoning and NLU (PIQA~\cite{bisk2020piqa}, BoolQ~\cite{clark2019boolq}, WinoGrande~\cite{sakaguchi2021winogrande}, ARC-E/C~\cite{clark2018arc}, HellaSwag~\cite{zellers2019hellaswag}), truthfulness (TruthfulQA-MC2~\cite{lin2022truthfulqa}), broad knowledge and multi-task understanding (MMLU~\cite{hendrycks2020mmlu}), code generation (HumanEval~\cite{chen2021humaneval}), and mathematical reasoning (GSM8K~\cite{cobbe2021gsm8k}).
Following the official evaluation scripts provided in the released repositories, we adopt the repository-default scoring mode for each benchmark.

\textbf{Baselines.}
We compare FAIR-Calib against vanilla RTN, QuaRot~\cite{ashkboos2024quarot}, and FlatQuant~\cite{sun2024flatquant}  under the same W4A4 PTQ setting.
All baselines use 128 calibration sequences from WikiText2~\cite{merity2016wikitext2}.
QuaRot is calibrated with GPTQ, while FlatQuant and RTN use RTN-style reconstruction.
Following the default settings of each method, QuaRot and FlatQuant calibrate with sequence length 2048 for Dream and 4096 for LLaDA; unless stated otherwise, FAIR-Calib uses a shorter calibration length of 1024 for both model families.

\textbf{FAIR-Calib specifics.}
We adopt per-channel and per-token symmetric quantization for weights and activations, respectively. 
\textbf{Stage I (probing)} runs the FP teacher for \emph{T=256} diffusion steps with \emph{random commit} to estimate a fixed prior $\bar{w}$
over a $K$-token generation window (default $K{=}256$).
We probe on 512 GSM8K questions from the \emph{train} split, formatting each prompt with a standard zero-shot CoT instruction (e.g., ``Let’s think step by step.''). We then compute masked-stage reliability scores
$\widetilde{c}_{t,i}$ from token probability (row-wise min--max normalized over masked positions).
For the frontier-hit term, we use a polynomial early-boost schedule
\begin{equation}
\lambda_0(t)=\lambda_0\cdot \max\!\left(\left(\frac{t-1}{T-1}\right)^{\alpha},\ \rho\right),\qquad t=1,\dots,T,
\end{equation}
where larger $t$ corresponds to earlier steps; we set $\lambda_0{=}1.0$, $\alpha{=}1.5$, and $\rho{=}0.1$.
\textbf{Stage II (calibration)} performs standard teacher-forcing layer-wise calibration on WikiText2,
minimizing the $\bar{w}$-weighted hidden-state MSE, with $\lambda_1{=}1.0$.

\section{More Mechanistic Diagnostics}
\label{app:mech_diagnostics}
\subsection{Post-commit mismatch diagnostics.}
\label{app:post_commit_mismatch}
We measure \emph{post-commit self-consistency} along a decoding trajectory.
For a position $i$ that is first written at step $t_{\mathrm{write}}(i)$ with token $\hat{x}_i$, we re-evaluate the model's top-1 prediction at the same position $i$ for all later steps while keeping the committed token fixed in the state, and record whether the current top-1 prediction matches $\hat{x}_i$.
We report:
(i) \textbf{mean\_disagree\_rate}, the average fraction of post-commit steps where the top-1 prediction disagrees with $\hat{x}_i$, averaged over committed positions;
(ii) \textbf{never\_agree\_rate}, the fraction of committed positions whose top-1 prediction never matches $\hat{x}_i$ at any subsequent step.
Figure~\ref{fig:global_post_commit_mismatch} shows that quantization increases post-commit mismatch under a uniform baseline, while FAIR-Calib consistently reduces both metrics, with a larger reduction on \texttt{never\_agree\_rate}.

\begin{figure}[ht]
    \centering
    \includegraphics[width=0.7\linewidth]{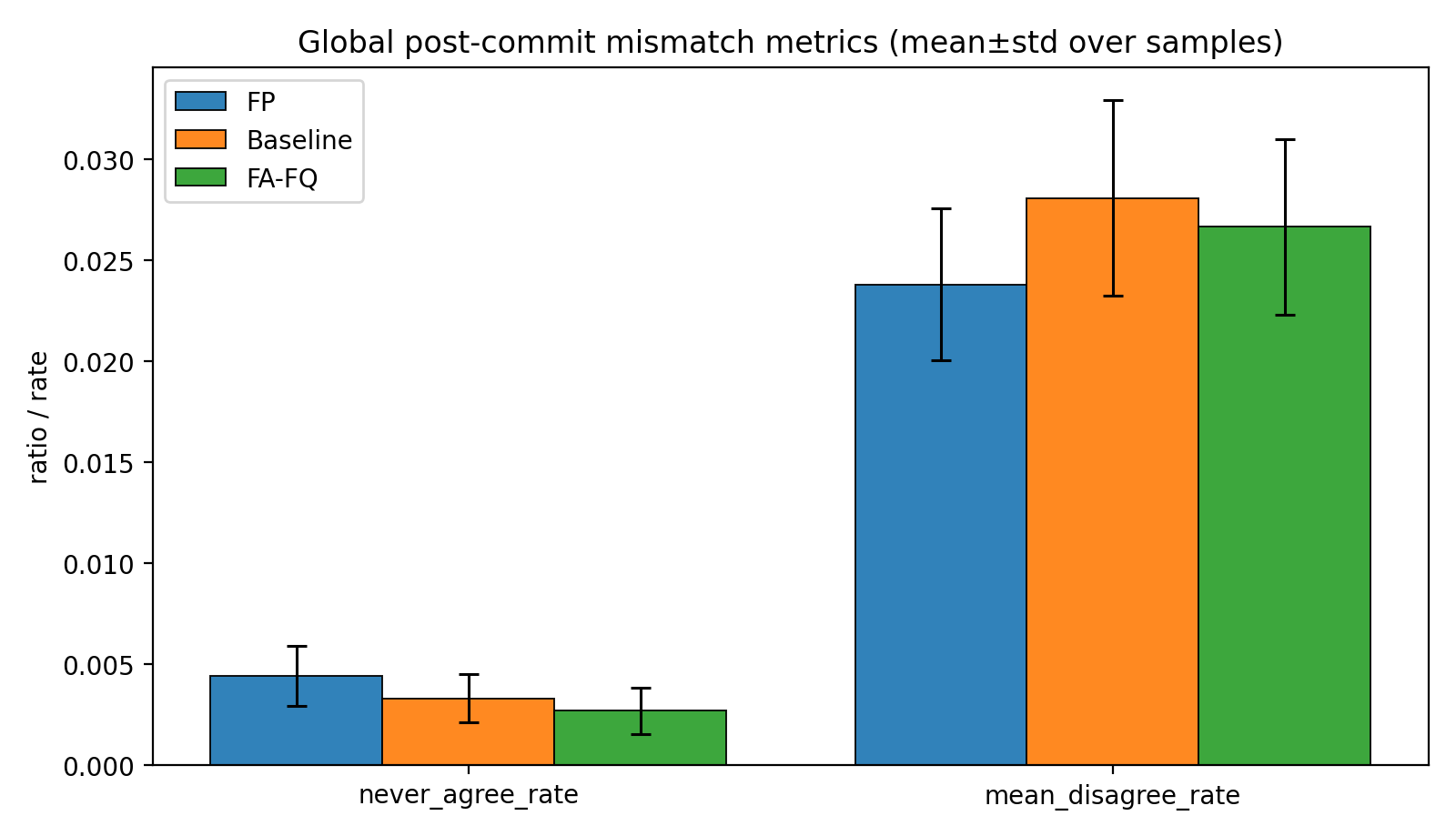}
    \caption{\textbf{Post-commit mismatch metrics.}
    We report \texttt{never\_agree\_rate} (left) and \texttt{mean\_disagree\_rate} (right), averaged over evaluation samples (mean$\pm$std), for the FP teacher, a uniformly calibrated W4A4 baseline, and FAIR-Calib.
    Quantization exacerbates post-commit mismatch, while FAIR-Calib reduces both metrics, especially the ``never-agree'' failures.}
    \label{fig:global_post_commit_mismatch}
\end{figure}

\subsection{Cross-corpus rank consistency of position weights}
\label{app:cross_corpus}
To test whether the probed weight prior reflects corpus statistics or decoding dynamics, we compute mean normalized weight curves over the generation window on two corpora (GSM8K-CoT vs.\ WikiText2) and report Spearman (and Pearson) correlations between the two mean curves.
Since per-sample weights are normalized before averaging, the absolute scale of the mean curve (often centered around $\sim0.5$) is not informative; we focus on the relative ordering across positions (Spearman).
Figure~\ref{fig:prior_mean_curve} compares four probing regimes:
\begin{itemize}
    \item (a) \emph{fixed-trajectory random commit} (16 samples; fixed seed), where both corpora share the same commit-set sampling trajectory, yielding high consistency;
    \item (b) \emph{less-stochastic commits} (256 samples; entropy score with top-$k$ commit), which reduces within-corpus variability but introduces a policy-dependent frontier-update pattern and yields lower cross-corpus agreement;
    \item (c) \emph{independent-trajectory random commit} (256 samples), where stochastic trajectory variation reduces mean-curve consistency relative to the fixed-trajectory setting, yet substantial correlation remains, indicating a persistent mechanism-driven component.
    \item (d) Under the same setting as (c), \emph{frontier-hit only} yields near-zero cross-corpus agreement (Spearman $\approx-0.056$), suggesting that frontier-hit statistics alone are dominated by trajectory noise and corpus-specific effects, and motivating the masked-stage reliability gate for constructing a transferable static prior. 
\end{itemize}

\begin{figure}[ht]
    \centering
    \includegraphics[width=1.0\linewidth]{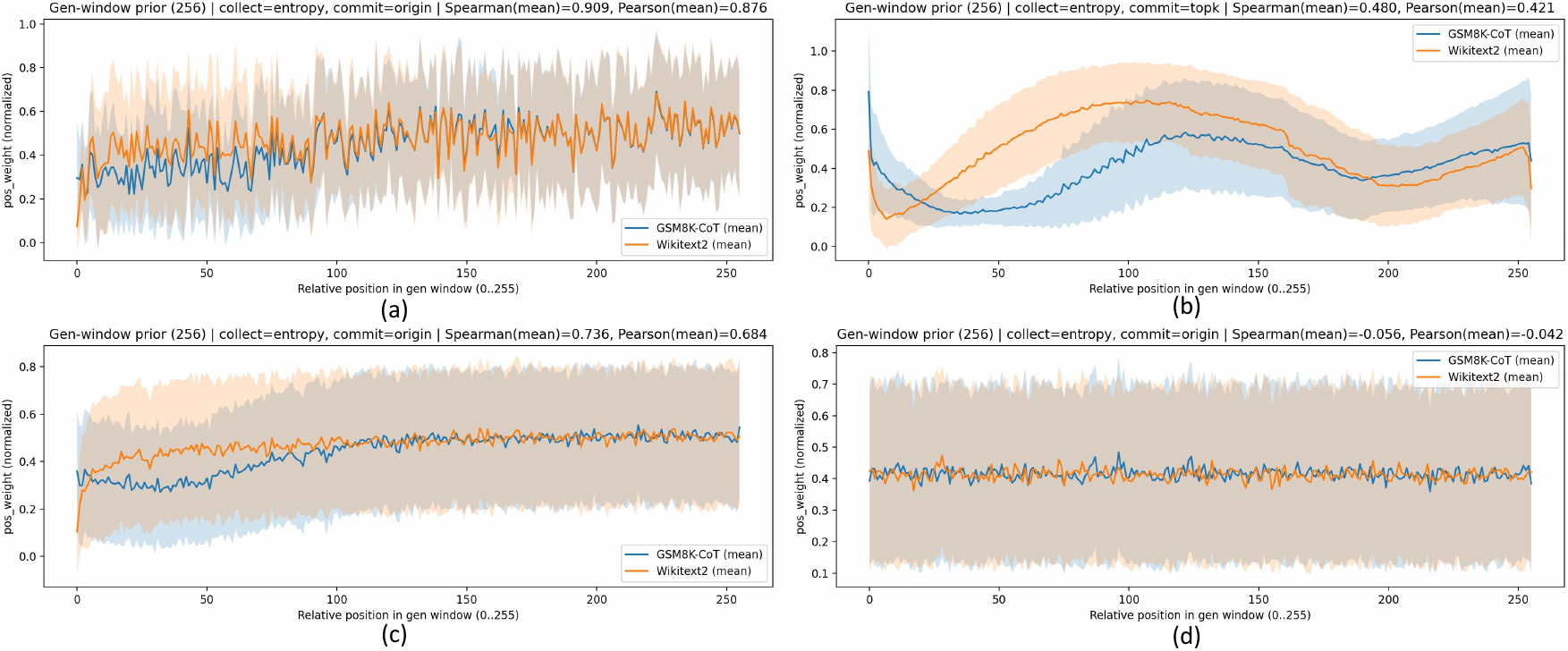}
    \caption{\textbf{Cross-corpus consistency of the time$\times$position weight prior.}
    Mean normalized weight curves over the 256-token generation window for GSM8K-CoT (blue) and WikiText2 (orange) under four probing settings (top to bottom).
    Titles report Spearman and Pearson correlations between the two mean curves; shaded bands visualize across-sample variability.}
    \label{fig:prior_mean_curve}
\end{figure}

\subsection{Decision-Margin Preservation at Commit Steps}
\label{app:decision_margin}

To further substantiate whether FAIR-Calib better preserves write-relevant decisions after quantization, we analyze the frontier decision margin at commit steps. 
We first run the full-precision teacher to obtain a reference decoding trajectory. 
For each diffusion step $t$ and position $i$, let $x_{\mathrm{fin}}(i)$ denote the token finally committed at position $i$ by the teacher decoding trajectory. 
We then evaluate each quantized model on the same pre-commit state from the teacher trajectory and compute its logit margin with respect to this teacher-committed target token: 
\begin{equation}
    \operatorname{margin}(t,i)
    =
    \operatorname{logit}_{t,i}\!\left(x_{\mathrm{fin}}(i)\right)
    -
    \max_{v \neq x_{\mathrm{fin}}(i)}
    \operatorname{logit}_{t,i}(v),
\end{equation}
where the logits are produced by the quantized student model. 
In other words, the teacher trajectory provides the reference target token $x_{\mathrm{fin}}(i)$, while the margin measures how strongly the quantized model ranks this teacher-committed token over its strongest competing alternative under the same write-frontier state.

\begin{figure}[t]
\centering
\includegraphics[width=0.55\linewidth]{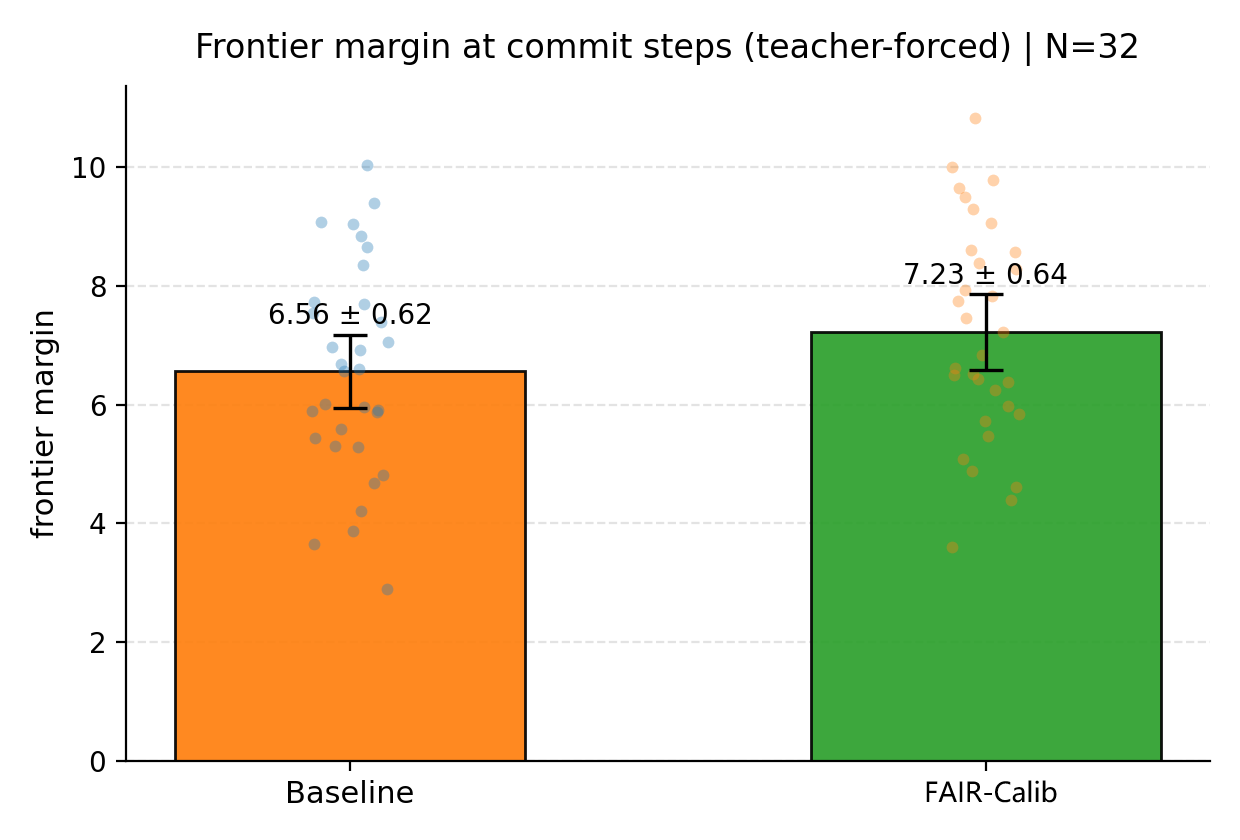}
\caption{\textbf{Average frontier margin at commit steps.}
FAIR-Calib better preserves the decision margin of fragile write-frontier states after quantization. The average frontier margin increases from $6.56{\pm}0.62$ for the unweighted baseline to $7.23{\pm}0.64$ under FAIR-Calib.}
\label{fig:frontier_margin_commit}
\end{figure}

As shown in Figure~\ref{fig:frontier_margin_commit}, FAIR-Calib achieves a higher average frontier margin at commit steps than the unweighted baseline. 
Since both methods are evaluated against the same teacher-committed target token $x_{\mathrm{fin}}(i)$, this diagnostic directly measures whether quantization changes the model's preference away from the teacher's write decision. 
The average margin increases from $6.56{\pm}0.62$ to $7.23{\pm}0.64$, providing more direct evidence that the proposed frontier-aware weighting better preserves fragile write-relevant decisions after quantization.

\subsection{Stability Lag Analysis for Dream and LLaDA}
\label{app:stability_lag}

We further analyze why FAIR-Calib yields larger gains on Dream than on LLaDA. We attribute this difference to their distinct decoding dynamics. In particular, we measure the \emph{stability lag} of commit decisions, which characterizes how long a position remains fragile after it is selected for commitment. A heavier tail in the stability-lag distribution indicates that the model's decisions stay unstable for more steps, making them more vulnerable to quantization-induced perturbations.

\begin{figure}[t]
\centering
\includegraphics[width=0.55\linewidth]{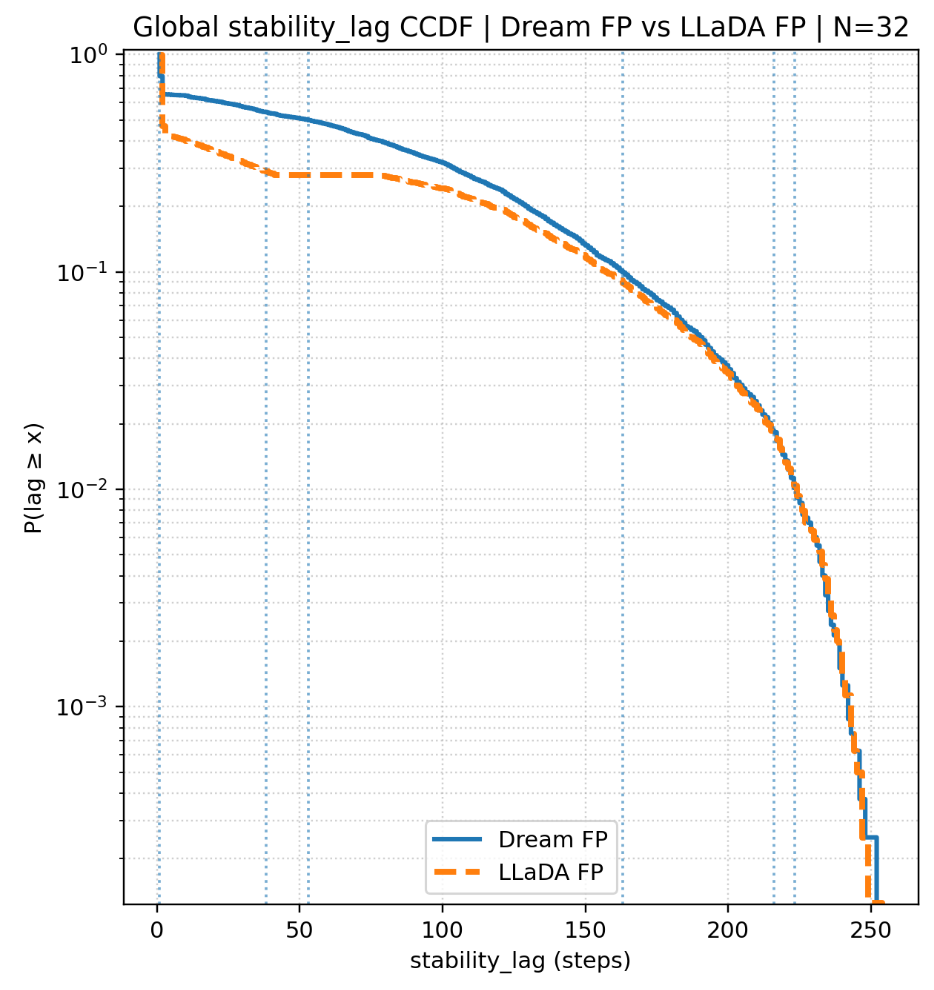}
\caption{\textbf{Stability-lag analysis for Dream and LLaDA.}
Dream exhibits a heavier tail in instability, suggesting that its commit decisions remain fragile for longer after the commit step. This explains why FAIR-Calib tends to yield larger gains on Dream: the proposed frontier-aware objective is designed to protect precisely these fragile write-relevant states.}
\label{fig:dream_llada_stability_lag}
\end{figure}

As shown in Figure~\ref{fig:dream_llada_stability_lag}, Dream exhibits a heavier tail in instability, which means its decisions stay fragile for longer after the commit step compared to LLaDA. Since FAIR-Calib is specifically designed to protect these fragile states, it naturally yields higher gains where the "stability lag" issue is more acute. LLaDA is inherently more stable, yet FAIR-Calib still provides consistent, non-trivial improvements.

\section{Additional Generalization Results}
\label{app:additional_generalization}

In this section, we provide two additional studies to further examine the generality of the proposed frontier-aware calibration objective. First, we evaluate FAIR-Calib under a more challenging W3A4 quantization setting, where quantization noise is stronger and commitment-flip errors are expected to be more severe. Second, we integrate the proposed position-weighted objective into a QDrop-style calibration framework that directly optimizes quantization parameters, in order to verify that the proposed objective is not tied to a specific PTQ pipeline.

\subsection{Lower-Bit Quantization under W3A4}
\label{app:w3a4}

We first evaluate FAIR-Calib under the more challenging W3A4 setting. We use the same LLaDA-Instruct model, PTQ pipeline, calibration setup, and evaluation protocol as in the main experiments, and only change the quantization precision from W4A4 to W3A4. We compare FAIR-Calib with FlatQuant under this lower-bit setting.

\begin{table}[t]
\centering
\caption{\textbf{Comparison on LLaDA-Instruct under W3A4 quantization.}
Compared with FlatQuant, FAIR-Calib improves the overall average from 69.58 to 70.41, yielding a gain of +0.83 points.}
\label{tab:w3a4_llada_instruct}
\setlength{\tabcolsep}{3.5pt}
\small
\resizebox{\linewidth}{!}{
\begin{tabular}{lccccccccccc}
\toprule
\textbf{Method}
& \textbf{PIQA}
& \textbf{BoolQ}
& \textbf{WinoGrande}
& \textbf{ARC-e}
& \textbf{ARC-c}
& \textbf{HellaSwag}
& \textbf{TruthfulQA}
& \textbf{MMLU}
& \textbf{HumanEval}
& \textbf{GSM8K-CoT}
& \textbf{AVG} \\
\midrule
Real-valued
& 82.86 & 88.38 & 77.35 & 94.00 & 88.47 & 76.89 & 48.47 & 64.36 & 46.95 & 70.36 & 73.81 \\
FlatQuant
& 79.54 & \textbf{87.65} & 74.06 & \textbf{91.71} & 83.05 & 71.12 & 46.15 & 60.60 & 36.59 & 65.34 & 69.58 \\
FAIR-Calib
& \textbf{79.88} & 87.52 & \textbf{74.90} & \textbf{91.71} & \textbf{85.42} & \textbf{71.29} & \textbf{47.52} & \textbf{60.78} & \textbf{38.41} & \textbf{66.66} & \textbf{70.41} \\
\bottomrule
\end{tabular}
}
\end{table}

As shown in Table~\ref{tab:w3a4_llada_instruct}, FAIR-Calib remains effective under the lower-bit W3A4 setting. Compared with W3A4 FlatQuant, FAIR-Calib improves the overall average from 69.58 to 70.41, with a gain of +0.83 points. The improvements are especially clear on ARC-c, HumanEval, TruthfulQA, GSM8K-CoT, and WinoGrande, where FAIR-Calib improves the corresponding scores by +2.37, +1.82, +1.37, +1.32, and +0.84 points, respectively. These results suggest that the proposed frontier-aware weighting remains beneficial in the more challenging low-bit regime, where quantization-induced perturbations are stronger and write-frontier states are more vulnerable.

\subsection{Extension to QDrop-Style Calibration}
\label{app:qdrop_style}

We further examine whether the proposed weighting strategy can be applied beyond the specific calibration pipeline used in FAIR-Calib. To this end, we integrate the proposed position-weighted hidden-state reconstruction objective into a QDrop-style calibration framework. Unlike the main FAIR-Calib pipeline, this setting directly optimizes quantization parameters, including scale and zero-point, rather than relying on affine transformation learning. For a fair comparison, we keep the same W8A8 quantization setting, model, calibration data, and evaluation protocol, and only replace the vanilla calibration objective with the proposed position-weighted version.

\begin{table}[t]
\centering
\caption{\textbf{Comparison with the QDrop-style baseline under W8A8 quantization.}
The proposed position-weighted objective substantially improves the QDrop-style baseline, increasing the average score from 70.66 to 73.17.}
\label{tab:qdrop_style}
\setlength{\tabcolsep}{3.2pt}
\small
\resizebox{\linewidth}{!}{
\begin{tabular}{lccccccccccc}
\toprule
\textbf{Method}
& \textbf{PIQA}
& \textbf{BoolQ}
& \textbf{WinoGrande}
& \textbf{ARC-e}
& \textbf{ARC-c}
& \textbf{HellaSwag}
& \textbf{TruthfulQA}
& \textbf{MMLU}
& \textbf{HumanEval}
& \textbf{GSM8K-CoT}
& \textbf{AVG} \\
\midrule
Real-valued
& 82.86 & 88.38 & 77.35 & 94.00 & 88.47 & 76.89 & 48.47 & 64.36 & 46.95 & 70.36 & 73.81 \\
QDrop-style
& 80.69 & 88.20 & 75.45 & 92.06 & 87.46 & 72.21 & 47.14 & 61.78 & 36.59 & 65.05 & 70.66 \\
QDrop-style + position-weighted MSE
& \textbf{82.93} & \textbf{88.44} & \textbf{77.19} & \textbf{93.30} & \textbf{88.47} & 76.74 & 47.14 & 64.06 & 43.29 & 70.13 & 73.17 \\
FAIR-Calib
& 82.64 & 88.35 & \textbf{77.19} & \textbf{93.30} & \textbf{88.47} & \textbf{76.93} & \textbf{48.58} & \textbf{64.43} & \textbf{46.95} & \textbf{71.27} & \textbf{73.81} \\
\bottomrule
\end{tabular}
}
\end{table}

As shown in Table~\ref{tab:qdrop_style}, the proposed position-weighted objective substantially improves the QDrop-style baseline. The average score increases from 70.66 to 73.17, yielding a gain of +2.51 points. This brings the QDrop-style pipeline much closer to the real-valued model, whose average score is 73.81. In addition, the full FAIR-Calib pipeline achieves an average score of 73.81 under W8A8 quantization, matching the real-valued model in overall average performance. These results indicate that the proposed frontier-aware objective is not specific to a single PTQ implementation. Instead, it can serve as a general calibration principle for protecting fragile write-frontier states during post-training quantization.

\section{Efficiency Discussion}
\label{app:efficiency}

We discuss efficiency from two aspects: (i) \textit{calibration efficiency}, and (ii) \textit{compression efficiency} 

\subsection{Calibration efficiency}
\label{app:calib_efficiency}
\textbf{Stage-I probing cost.}
Stage~I runs a full-precision teacher for $T$ diffusion steps under random commits, using $N_{\mathrm{probe}}$ samples, and only records lightweight statistics (frontier-hit indicators and masked-stage sharpness) over the $K$-token generation window.
Thus the probing complexity is
\begin{equation}
\mathcal{C}_{\mathrm{probe}} \;=\; \mathcal{O}\!\left(N_{\mathrm{probe}}\cdot T \cdot \mathrm{Cost}(\text{FP forward per step})\right),
\end{equation}
with a small constant factor because no gradients are stored and only a window-aligned weight vector is accumulated.
In our default setting ($T{=}256$, $N_{\mathrm{probe}}{=}512$, $K{=}256$), the probing budget is moderate and independent of any layer-wise calibration iterations.

\textbf{Stage-II off-policy calibration avoids diffusion rollouts.}
A key efficiency feature of FAIR-Calib is that Stage~II \textbf{\emph{does not}} perform end-to-end diffusion rollouts for the quantized model during calibration.
Instead, we use teacher-forcing on fully observed sequences and apply sequential layer-wise reconstruction with the fixed prior $\bar w$ (Eq.~\eqref{eq:layerwise_loss}).
This makes the calibration cost comparable to conventional layer-wise PTQ:
\begin{equation}
\mathcal{C}_{\mathrm{cal}} \;=\; \mathcal{O}\!\left(L \cdot N_{\mathrm{cal}}\cdot \mathrm{Cost}(\text{forward/backward on a single layer/block})\right),
\end{equation}
where $L$ is the number of blocks and $N_{\mathrm{cal}}$ is the number of calibration sequences (we use 128 sequences from WikiText2 by default).
Crucially, this avoids the multiplicative factor of $T$ that would arise if one calibrated by differentiating through diffusion trajectories.

\textbf{Shorter sequences and no on-policy state collection.}
FAIR-Calib uses a shorter calibration sequence length by default (1024 for both Dream and LLaDA), while some strong baselines calibrate with longer sequences (e.g., 2048/4096 depending on the model family).
Moreover, because $\bar w$ is computed once in Stage~I and reused statically, Stage~II does not require collecting on-policy masked states induced by a model-dependent commit policy, which would otherwise increase calibration complexity and engineering overhead.

\textbf{Memory footprint during calibration.}
Stage~II requires comparing teacher and quantized hidden states, but this can be implemented in a streaming manner:
teacher activations can be computed under \texttt{no\_grad} and consumed immediately for the corresponding layer/block calibration.
Thus, the peak memory overhead over standard layer-wise PTQ is typically limited to holding (i) the current layer's teacher hidden states and (ii) the quantized layer's activations needed for local reconstruction, rather than storing full diffusion trajectories.
The substantially reduced calibration-time memory footprint (Table~\ref{tab:calib_mem}) makes FAIR-Calib practical beyond server-class GPUs.
In particular, the peak memory during calibration is reduced to $\sim$12--15\,GB for LLaDA/Dream in our setup, which falls within the budget of widely available consumer-grade GPUs (e.g., 16--24\,GB).
This lower footprint allows practitioners to run calibration locally, and also leaves additional headroom to increase the calibration batch size, sequence length, or the number of calibration steps when needed.

\begin{table}[h]
\centering
\caption{\textbf{Calibration GPU memory footprint.}
Peak GPU memory usage during calibration (reported memory in MB; lower is better).
\textbf{Reduction} is the ratio \texttt{Baseline / FAIR-Calib}.}
\label{tab:calib_mem}
\setlength{\tabcolsep}{6pt}
\small
\begin{tabular}{lccc}
\toprule
\textbf{Model} & \textbf{Baseline} & \textbf{FAIR-Calib} & \textbf{Reduction} \\
\midrule
LLaDA-Base      & 32923 & 12121 & 2.72$\times$ \\
LLaDA-Instruct  & 32923 & 12121 & 2.72$\times$ \\
LLaDA-1.5       & 32923 & 12121 & 2.72$\times$ \\
Dream-Base      & 29943 & 14869 & 2.01$\times$ \\
Dream-Instruct  & 29943 & 14869 & 2.01$\times$ \\
\bottomrule
\end{tabular}
\end{table}

\subsection{Compression efficiency}
\label{app:compression_efficiency}
\textbf{Weight memory reduction under W4A4.}
At deployment, low-bit PTQ primarily reduces the storage of model weights.
Ignoring small metadata terms, the weight memory scales approximately as
\begin{equation}
\mathrm{Mem}(\text{weights}) \approx \frac{b_w}{16}\cdot \mathrm{Mem}(\text{FP16 weights}),
\end{equation}
suggesting an ideal $\sim 4\times$ reduction when $b_w{=}4$.
In practice, the realized memory footprint also includes auxiliary tensors such as per-channel scales,
as well as runtime buffers and framework overhead, so the end-to-end reduction is typically smaller than the ideal ratio.
Table~\ref{tab:deploy_mem} summarizes the deployment-time memory footprint after quantization.
Across both model families, W4A4 quantization calibrated by FAIR-Calib reduces memory by $\sim$3.1--3.2$\times$ compared with FP,
while keeping the inference-time decoding procedure unchanged (same number of diffusion steps and commit policy). 

\begin{table}[t]
\centering
\caption{\textbf{Deployment memory footprint after quantization.}
Memory usage in GB for FP vs.\ W4A4 (FAIR-Calib). \textbf{Memory saving} is the ratio \texttt{FP / FAIR-Calib}.}
\label{tab:deploy_mem}
\setlength{\tabcolsep}{8pt}
\small
\begin{tabular}{lccc}
\toprule
\textbf{Model} & \textbf{FP (GB)} & \textbf{FAIR-Calib (GB)} & \textbf{Memory saving} \\
\midrule
LLaDA-Base      & 15.89 & 4.91 & 3.24$\times$ \\
LLaDA-Instruct  & 15.89 & 4.91 & 3.24$\times$ \\
LLaDA-1.5       & 15.88 & 4.90 & 3.24$\times$ \\
Dream-Base      & 13.95 & 4.44 & 3.14$\times$ \\
Dream-Instruct  & 13.95 & 4.44 & 3.14$\times$ \\
\bottomrule
\end{tabular}
\end{table}

\end{document}